\newcommand{\nemsis}{\textbf{NEMSIS}\xspace}
\newcommand{\nemsisns}{\textbf{NEMSIS-NS}\xspace}
\newcommand{\scan}{\textbf{SCAN}\xspace}
\newcommand{\scanns}{\textbf{SCAN-NS}\xspace}
\newcommand{\scanfc}{\textbf{CAN}\xspace}
\newcommand{\kld}{\text{KLD}\xspace}
\newcommand{\tpr}{\text{TPR}\xspace}
\newcommand{\tnr}{\text{TNR}\xspace}
\newcommand{\ba}{\text{BA}\xspace}
\newcommand{\bakld}{\text{BAKLD}\xspace}
\newcommand{\nss}{\text{NSS}\xspace}
\newcommand{\fp}{\text{FP}\xspace}
\newcommand{\fn}{\text{FN}\xspace}
\newcommand{\class}{\text{class}\xspace}
\newcommand{\quant}{\text{quant}\xspace}
\newcommand{\<}{\leftarrow}
\newcommand{\Ind}[1]{{\mathbb I}\bs{{#1}}}
\newcommand{\R}{{\mathbb R}}
\newcommand{\T}{{\cal T}}
\renewcommand{\P}{{\mathbf P}}
\newcommand{\pdsgd}{{\bfseries SPADE}\xspace}
\newcommand{\am}{{\bfseries AMP}\xspace}
\newcommand{\amsgd}{{\bfseries STAMP}\xspace}
\newcommand{\1}{{\mathbf 1}}
\newcommand{\X}{{\mathcal X}}
\newcommand{\Y}{{\mathcal Y}}
\newcommand{\A}{{\mathcal A}}
\newcommand{\D}{\mathcal D}
\newcommand{\W}{{\mathcal W}}
\newcommand{\barw}{{\overline \w}}
\newcommand{\z}{{\mathbf z}}
\newcommand{\Pf}{{\mathcal P}}
\renewcommand{\vec}[1]{{\mathbf{#1}}}
\newcommand{\vecs}{\vec{s}}
\newcommand{\veczero}{\vec{0}}
\newcommand{\vz}{\veczero}
\newcommand{\x}{\vec{x}}
\newcommand{\w}{\vec{w}}
\newcommand{\y}{\vec{y}}
\newcommand{\rew}{\vec{r}}
\newcommand{\qew}{\vec{q}}
\newcommand{\bc}[1]{\left\{{#1}\right\}}
\newcommand{\br}[1]{\left({#1}\right)}
\newcommand{\bs}[1]{\left[{#1}\right]}
\newcommand{\abs}[1]{\left| {#1} \right|}
\newcommand{\norm}[1]{\left\| {#1} \right\|}
\newcommand{\bsd}[1]{\left\llbracket{#1}\right\rrbracket}
\newcommand{\etal}{\emph{et al}\ }
\renewcommand{\O}[1]{{\cal O}\br{{#1}}}
\newcommand{\softO}[1]{\widetilde{\cal O}\br{{#1}}}
\newcommand{\E}[1]{{\mathbb E}\bsd{{#1}}}
\newcommand{\EE}[2]{\underset{#1}{\mathbb E}\bsd{{#2}}}
\newcommand{\Prr}[2]{\underset{#1}{\text{Pr}}\bs{{#2}}}
\newcommand{\ip}[2]{\left\langle{#1},{#2}\right\rangle}
\newcommand{\valpha}{\text{\boldmath$\mathbf{\alpha}$}}
\newcommand{\vbeta}{\text{\boldmath$\mathbf{\beta}$}}
\newcommand{\vgamma}{\text{\boldmath$\mathbf{\gamma}$}}
\newtheorem{lem}{Lemma}
\newtheorem{thm}[lem]{Theorem}
\newtheorem{clm}[lem]{Claim}
\newtheorem{defn}[lem]{Definition}
\newcommand{\svmperf}{\text{SVM$^{\text{\it perf}}$}\xspace}
\newcommand{\email}[1]{\texttt{#1}}
\title{Online Optimization Methods for\\the Quantification Problem\thanks{A short version of this manuscript will appear in the proceedings of the 22nd ACM SIGKDD Conference on Knowledge Discovery and Data Mining, KDD 2016.}}
\author{
Purushottam Kar\\IIT Kanpur, India\\\email{purushot@cse.iitk.ac.in}
\and
Shuai Li\thanks{This work was done while Shuai Li was a research associate at QCRI-HBKU.}\\University of Insubria, Italy\\\email{shuaili.sli@gmail.com}
\and
Harikrishna Narasimhan\\Harvard University, USA\\\email{hnarasimhan@g.harvard.edu}
\and
Sanjay Chawla\thanks{Sanjay Chawla is on leave from University of Sydney.}\\QCRI-HBKU, Qatar\\\email{schawla@qf.org.qa}
\and
Fabrizio Sebastiani\thanks{Fabrizio Sebastiani is on leave from Consiglio Nazionale delle Ricerche, Italy.}\\QCRI-HBKU, Qatar\\\email{fsebastiani@qf.org.qa}    
}
\date{}
\begin{document}

\maketitle

\begin{abstract}
The estimation of class prevalence, i.e., the fraction of a population that belongs to a certain class, is a very useful tool in data analytics and learning, and finds applications in many domains such as sentiment analysis, epidemiology, etc. For example, in sentiment analysis, the objective is often not to estimate whether a specific text conveys a positive or a negative sentiment, but rather estimate the overall distribution of positive and negative sentiments during an event window. A popular way of performing the above task, often dubbed \emph{quantification}, is to use supervised learning to train a prevalence estimator from labeled data.

Contemporary literature cites several performance measures used to measure the success of such prevalence estimators. In this paper we propose the first online stochastic algorithms for \emph{directly} optimizing these quantification-specific performance measures. We also provide algorithms that optimize \emph{hybrid} performance measures that seek to balance quantification and classification performance. Our algorithms present a significant advancement in the theory of multivariate optimization and we show, by a rigorous theoretical analysis, that they exhibit optimal convergence. We also report extensive experiments on benchmark and real data sets which demonstrate that our methods significantly outperform existing optimization techniques used for these performance measures.
\end{abstract}

\section{Introduction}
\label{intro}

\noindent \emph{Quantification} \cite{Forman:2008kx} is defined as the task of estimating the prevalence (i.e., relative frequency) of the classes of interest in an unlabeled set, given a training set of items labeled according to the same classes. Quantification finds its natural application in contexts characterized by \emph{distribution drift}, i.e., contexts where the training data may not exhibit the same class prevalence pattern as the test data. This phenomenon may be due to different reasons, including the inherent non-stationary character of the context, or class bias that affects the selection of the training data. 

A na\"ive way to tackle quantification is via the ``classify and count'' (CC) approach, i.e., to classify each unlabeled item independently and compute the fraction of the unlabeled items that have been attributed to each class. However, a good classifier does not necessarily lead to a good quantifier: assuming the binary case, even if the sum (FP + FN) of the false positives and false negatives is comparatively small, bad quantification accuracy might result if FP and FN are significantly different (since perfect quantification coincides with the case FP = FN). This has led researchers to study quantification as a task in its own right, rather than as a byproduct of classification.

The fact that quantification is not just classification in disguise can also be seen by the fact that evaluation measures different from those for classification (e.g., $F_{1}$, AUC) need to be employed. Quantification actually amounts to computing how well an estimated class distribution $\hat{p}$ fits an actual class distribution $p$ (where for any class $c \in \mathcal C$, $p(c)$ and $\hat p(c)$ respectively denote its true and estimated prevalence); as such, the natural way to evaluate the quality of this fit is via a function from the class of \emph{$f$-divergences} \cite{Csiszar:2004fk}, and a natural choice from this class (if only for the fact that it is the best known $f$-divergence) is the \emph{Kullback-Leibler Divergence} (\kld), defined as
\begin{equation}
  \label{eq:KLD}
  \kld(p,\hat{p}) = \sum_{c\in \mathcal{C}} 
  p(c)\log\frac{p(c)}{\hat{p}(c)}
\end{equation}
\noindent Indeed, \kld is the most frequently used measure for evaluating quantification (see e.g., \cite{Barranquero:2015fr,Esuli:2015gh,Forman:2008kx,Gao:2015ly}).  Note that \kld is non-decomposable, i.e., the error we make by estimating $p$ via $\hat{p}$ cannot be broken down into item-level errors. This is not just a feature of \kld, but an inherent feature of \emph{any} measure for evaluating quantification. In fact, how the error made on a given unlabeled item impacts the overall quantification error depends on how the other items have been classified\footnote{For the sake of simplicity, we assume here that quantification is to be tackled in an \emph{aggregative} way, i.e., the classification of individual items is a necessary intermediate step for the estimation of class prevalences. Note however that this is not necessary; non-aggregative approaches to quantification may be found in \cite{Gonzalez-Castro:2013fk,King:2008fk}.}; e.g., if $\fp>\fn$ for the other unlabeled items, then generating an additional false negative is actually \emph{beneficial} to the overall quantification accuracy, be it measured via \kld or via any other function. 

The fact that \kld is the measure of choice for quantification and that it is non-decomposable, has lead to the use of structured output learners, such as \svmperf \cite{Joachims05}, that allow a direct optimization of non-decomposable functions; the approach of Esuli and Sebastiani \cite{Esuli:2010fk,Esuli:2015gh} is indeed based on optimizing \kld using \svmperf. However, that minimizing \kld\ (or $|\fp-\fn|$, or any ``pure'' quantification measure) should be the only objective for quantification regardless of the value of $\fp+\fn$ (or any other classification measure), is fairly paradoxical. Some authors \cite{Barranquero:2015fr,Milli:2013fk} have observed that this might lead to the generation of unreliable quantifiers (i.e., systems with good quantification accuracy but bad or very bad classification accuracy), and have, as a result, championed the idea of optimizing ``multi-objective'' measures that combine quantification accuracy with classification accuracy. Using a decision-tree-like approach, \cite{Milli:2013fk} minimizes $|\fp^{2}-\fn^{2}|$, which is the product of $|\fn-\fp|$, a measure of quantification error, and $(\fn+\fp)$, a measure of classification error; \cite{Barranquero:2015fr} also optimizes (using \svmperf) a measure that combines quantification and classification accuracy.

While \svmperf does provide a recipe for optimizing general performance measures, it has serious limitations. \svmperf is not designed to directly handle applications where large streaming data sets are the norm. \svmperf also does not scale well to multi-class settings, and the time required by the method is exponential in the number of classes. 

In this paper we develop stochastic methods for optimizing a large family of popular quantification performance measures. Our methods can effortlessly work with streaming data and scale to very large datasets, offering training times up to an order of magnitude faster than other approaches such as \svmperf.

\section{Related Work}
\label{relatedwork}

\noindent \textbf{Quantification methods.}
The quantification methods that have been proposed over the years can be broadly classified into two classes, namely aggregative and non-aggregative methods. While \emph{aggregative} approaches perform quantification by first classifying individual items as an intermediate step, \emph{non-aggregative} approaches do not require this step, and estimate class prevalences holistically. Most methods, such as those of \cite{Barranquero:2015fr,Bella:2010kx,Esuli:2015gh,Forman:2008kx,Milli:2013fk}, fall in the former class, while the latter class has few representatives \cite{Gonzalez-Castro:2013fk,King:2008fk}. 

Within the class of aggregative methods, a further distinction can be made between methods, such as those of \cite{Bella:2010kx,Forman:2008kx}, that first use general-purpose learning algorithms and then post-process their prevalence estimates to account for their estimation biases, and methods (which we have already hinted at in Section \ref{intro}) that instead use learning algorithms explicitly devised for quantification \cite{Barranquero:2015fr,Esuli:2015gh,Milli:2013fk}. In this paper we focus the latter class of methods.\\ 


\noindent \textbf{Applications of quantification.}
From an application perspective, quantification is especially useful in fields (such as social science, political science, market research, and epidemiology) which are inherently interested in aggregate data, and care little about individual cases. Aside from applications in these fields \cite{Hopkins:2010fk,King:2008fk}, quantification has also been used in contexts as diverse as natural language processing \cite{Chan2006}, resource allocation \cite{Forman:2008kx}, tweet sentiment analysis \cite{Gao:2015ly}, and the veterinary sciences \cite{Gonzalez-Castro:2013fk}. Quantification has independently been studied within statistics \cite{Hopkins:2010fk,King:2008fk}, machine learning \cite{Balikas:2015mz,du-Plessis:2012nr,Saerens:2002uq}, and data mining \cite{Esuli:2015gh,Forman:2008kx}. Unsurprisingly, given this varied literature, quantification also goes under different names, such as  \emph{counting} \cite{Lewis95}, \emph{class
probability re-estimation} \cite{Alaiz-Rodriguez:2011fk}, \emph{class
prior estimation} \cite{Chan2006}, and \emph{learning of class balance} \cite{du-Plessis:2012nr}. 

In some applications of quantification, the estimation of class prevalences is not an end in itself, but is rather used to improve the accuracy of other tasks such as classification. For instance, Balikas et al.\ \cite{Balikas:2015mz} use quantification for model selection in supervised learning, by tuning hyperparameters that yield the best quantification accuracy on validation data; this allows hyperparameter tuning to be performed without incurring the costs inherent to $k$-fold cross-validation. Saerens et al.\ \cite{Saerens:2002uq}, followed by other authors \cite{Alaiz-Rodriguez:2011fk,Xue:2009uq,Zhang:2010kx}, apply quantification to customize a trained classifier to the class prevalence exhibited in the test set, with the goal of improving classification
accuracy on unlabeled data exhibiting a class distribution different from that of the training set. The work of Chan and Ng \cite{Chan2006} may be seen as a direct application of this notion, as they use quantification to tune a word sense disambiguator to the estimated sense priors of the test set. Their work can also be seen as an instance of \emph{transfer learning} (see e.g., \cite{Pan:2012fk}), since their goal is to adapt a word sense disambiguation algorithm to a domain different from the one the algorithm was trained upon.\\

\noindent \textbf{Stochastic optimization.} As discussed in Section~\ref{intro}, our goal in this paper is to perform quantification by directly optimizing, in an online stochastic setting, specific performance measures for the quantification problem. While recent advances have seen much progress in efficient methods for online learning and optimization in full information and bandit settings \cite{online-batch-single,shuai14icml,log-regret,Shalev-ShwartzSSS2009}, these works frequently assume that the optimization objective, or the notion of regret being considered is decomposable and can be written as a sum or expectation of losses or penalties on individual data points. However, performance measures for quantification have a multivariate and complex structure, and do not have this form.

There has been some recent progress \cite{KarNJ14,NarasimhanKJ2015} towards developing stochastic optimization methods for such non-decomposable measures. However, these approaches do not satisfy the needs of our problem. The work of Kar et al.\ \cite{KarNJ14} addresses the problem of optimizing structured \svmperf-style objectives in a streaming fashion, but requires the maintenance of large buffers and, as a result, offers poor convergence. The work of Narasimhan et al.\ \cite{NarasimhanKJ2015} presents online stochastic methods for optimizing performance measures that are concave or pseudo-linear in the canonical confusion matrix of the predictor. However, their method requires the computation of gradients of the Fenchel dual of the performance measures, which is difficult for the quantification performance measures that we study, that have a nested structure. Our methods extend the work of \cite{NarasimhanKJ2015} and provide convenient routines for optimizing the more complex performance measures used for evaluating quantification.

\section{Problem Setting}
\label{formulation}

\noindent For the sake of simplicity, in this paper we will restrict our analysis to binary classification problems and linear models. We will denote the space of feature vectors by $\X \subset \R^d$ and the label set by $\Y = \bc{-1,+1}$. We shall assume that data points are generated according to some fixed but unknown distribution $\D$ over $\X\times\Y$. We will denote the proportion of positives in the population by $p := \Prr{(\x,y)\sim\D}{y = +1}$. Our algorithms, at training time, will receive a set of $T$ training points sampled from $\D$, which we will denote by $\T = \bc{(\x_1,y_1),\ldots,(\x_T,y_T)}$. 

As mentioned above, we will present our algorithms and analyses for learning a linear model over $\X$. We will denote the model space by $\W \subseteq \R^d$ and let $R_\X$ and $R_\W$ denote the radii of domain $\X$ and model space $\W$, respectively. However, we note that our algorithms and analyses can be extended to learning non-linear models by use of kernels, as well as to multi-class quantification problems. However, we postpone a discussion of these extensions to an expanded version of this paper. 

Our focus in this work shall be the optimization of quantification-specific performance measures in online stochastic settings. We will concentrate on performance measures that can be represented as functions of the confusion matrix of the classifier. In the binary setting, the confusion matrix can be completely described in terms of the true positive rate (TPR) and the true negative rate (TNR) of the classifier. However, initially we will develop algorithms that use \emph{reward functions} as surrogates of the TPR and TNR values. This is done to ease algorithm design and analysis, since the TPR and TNR values are count-based and form non-concave and non-differentiable estimators. The surrogates we will use will be concave and almost-everywhere differentiable. More formally, we will use a reward function $r$ that assigns a \emph{reward} $r(\hat y,y)$ to a prediction $\hat{y} \in \R$ for a data point, when the true label for that data point is $y \in \Y$. Given a reward function $r$, a model $\w \in \W$, and a data point $(\x,y) \in \X \times \Y$, we will use
\begin{align*}
r^+(\w; \x, y) &= \frac{1}{p}\cdot r(\w^\top\x, y)\cdot\1(y = 1)\\
r^-(\w;\x, y) &= \frac{1}{1-p}\cdot r(\w^\top\x, y)\cdot\1(y=-1)
\end{align*}
\noindent to calculate rewards on positive and negative points. The average or expected value of these rewards will be treated as surrogates of TPR and TNR respectively. Note that since $\EE{(\x,y)}{r^+(\w; \,\x, y)}$ = $\EE{(\x,y)}{r(\w^\top\x,y)|y = 1}$, setting $r$ to $r^{0\text{-}1}(\hat{y}, y)$ = $\Ind{y\cdot\hat{y} > 0}$, i.e. the classification accuracy function, yields $\EE{(\x,y)}{r^+(\w; \,\x, y)} = \text{TPR}(\w)$. Here $\Ind{\cdot}$ denotes the indicator function. For the sake of convenience we will use $P(\w)=\EE{(\x, y)}{r^+(\w; \x, y)}$ and $N(\w)=\EE{(\x, y)}{r^-(\w; \x, y)}$ to denote population averages of the reward functions. We shall assume that our reward function $r$ is concave, $L_r$-Lipschitz, and takes values in a bounded range $[-B_r,B_r]$.\\

\noindent\textbf{Examples of Surrogate Reward Functions} Some examples of reward functions that are surrogates for the classification accuracy indicator function $\Ind{y\hat{y} > 0}$ are the inverted hinge loss function
\[
r_{\text{hinge}}(\hat y, y) = \max\bc{1,y\cdot\hat y}
\]
and the inverted logistic regression function
\[
r_{\text{logit}}(\hat y, y) = 1-\ln(1+\exp(-y\cdot\hat y))
\]
We will also experiment with non-surrogate (dubbed NS) versions of our algorithms which use TPR and TNR values directly. These will be discussed in Section~\ref{sec:exps}.


\subsection{Performance Measures}
\label{evaluationmeasures}
\noindent The task of quantification requires estimating the distribution of unlabeled items across a set $\mathcal{C}$ of available classes, with $|\mathcal{C}|=2$ in the binary setting. In our work we will target quantification performance measures as well as ``hybrid'' classification-quantification performance measures. We discuss them in turn. \\

\noindent\textbf{\kld: Kullback-Leibler Divergence}
In recent years this performance measure has become a standard in the quantification literature, in the evaluation of both binary and multiclass quantification \cite{Barranquero:2015fr,Esuli:2015gh,Gao:2015ly}. We redefine \kld below for convenience.
\begin{equation}
  \label{eq:KLD2}
  \kld(p,\hat{p}) = \sum_{c\in \mathcal{C}} 
  p(c)\log\frac{p(c)}{\hat{p}(c)}
\end{equation}
For distributions $p, p'$ over $\mathcal C$, the values $\kld(p,p')$ can range between 0 (perfect quantification) and $+\infty$.\footnote{$\kld$ is not a particularly well-behaved performance measure, since it is capable of taking unbounded values within the compact domain of the unit simplex. This poses a problem for optimization algorithms from the point of view of convergence, as well as numerical stability. To solve this problem, while computing $\kld$ for two distributions $p$ and $\hat p$, we can perform an \emph{additive smoothing} of both $p(c)$ and $\hat{p}(c)$ by computing
\begin{equation}
  \label{eq:smoothing}
  \nonumber p_{s}(c)=\frac{\epsilon+p(c)}{\epsilon|\mathcal{C}|+\displaystyle\sum_{c\in 
  \mathcal{C}}p(c)}
\end{equation}
\noindent where $p_{s}(c)$ denotes the smoothed version of $p(c)$. The denominator here is just a normalizing factor. The quantity $\epsilon=\frac{1}{2 |\mathcal{S}|}$ is often used as a smoothing factor, and is the one we adopt here. The smoothed versions of $p(c)$ and $\hat{p}(c)$ are then used in place of the non-smoothed versions in Equation \ref{eq:KLD}. We can show that, as a result, \kld is always bounded by $\kld(p_s,\hat p_s) \leq \O{\log\frac{1}{\epsilon}}$ However, we note that the smoothed \kld still returns a value of 0 when $p$ and $\hat{p}$ are identical distributions.}. Note that since \kld is a distance function and all our algorithms will be driven by reward maximization, for uniformity we will, instead of trying to minimize \kld, try to maximize $-\kld$; we will call this latter Neg\kld. \\

\noindent\textbf{NSS: Normalized Squared Score} This measure of quantification accuracy was introduced in \cite{Barranquero:2015fr}, and is defined as $\nss = 1 - (\frac{\fn-\fp}{\max\{p,(1-p)\}|\mathcal S|})^{2}$. Ignoring normalization constants, this performance measure attempts to reduce $\abs{\fn-\fp}$, a direct measure of quantification error.\\

We recall from Section \ref{intro} that several works have advocated the use of hybrid, ``multi-objective'' performance measures, that try to balance quantification and classification performance. These measures typically take a quantification performance measure such as \kld or \nss, and combine it with a classification performance measure. Typically, a classification performance measure that is sensitive to class imbalance \cite{NarasimhanKJ2015} is chosen, such as Balanced Accuracy $\ba=\frac{1}{2}(\tpr+\tnr)$ \cite{Barranquero:2015fr}, F-measure, or G-mean \cite{NarasimhanKJ2015}. Two such hybrid performance measures that are discussed in literature are presented below.\\

\noindent\textbf{CQB: Classification-Quantification Balancing} The work of \cite{Milli:2013fk} introduced this performance measure in an attempt to compromise between classification and quantification accuracy. As discussed in Section~\ref{intro}, this performance measure is defined as
\[
\text{CQB} = |\fp^{2}-\fn^{2}| = |\fp-\fn|\cdot(\fp+\fn),
\]
i.e. a product of $|\fn-\fp|$, a measure of quantification error, and $(\fn+\fp)$, a measure of classification error.\\

\noindent\textbf{QMeasure} The work of Barranquero et al.\ \cite{Barranquero:2015fr} introduced a generic scheme for constructing hybrid performance measures, using the so-called Q-measure defined as
\begin{equation}
  \label{eq:qmeasure}
  Q_{\beta}=(1+\beta^{2})\cdot\frac{\mathcal{P}_{\rm{class}}\cdot \mathcal{P}_{\rm{quant}}}{\beta^{2}\mathcal{P}_{\rm{class}} + \mathcal{P}_{\rm{quant}}},
  \end{equation}
that is, a weighted combination of a measure of classification accuracy $\mathcal{P}_{\rm{class}}$ and a measure of quantification accuracy $\mathcal{P}_{\rm{quant}}$. For the sake of simplicity, in our experiments we will adopt $\ba=\frac{1}{2}(\tpr+\tnr)$ as our $\mathcal{P}_{\rm{class}}$ and NSS as our $\mathcal{P}_{\rm{quant}}$. However, we stress that our methods can be suitably adapted to work with other choices of $\mathcal{P}_{\rm{class}}$ and $\mathcal{P}_{\rm{quant}}$.\\

We also introduce three new hybrid performance measures in this paper as a way of testing our optimization algorithms. We define these below and refer the reader to Tables~\ref{tab:nestedexamples} and \ref{tab:pscexamples} for details.\\

\noindent\textbf{\bakld} This hybrid performance measure takes a weighted average of BA and Neg\kld; i.e. $\bakld = C\cdot\ba + (1-C)\cdot(-\kld)$. This performance measure gives the user a strong handle on how much emphasis should be placed on quantification and how much on classification performance. We will use \bakld in our experiments to show that our methods offer an attractive tradeoff between the two.\\

We now define two hybrid performance measures that are constructed by taking the \emph{ratio} of a classification and a quantification performance measures. The aim of this exercise is to obtain performance measures that mimic the F-measure, which is also a pseudolinear performance measure \cite{NarasimhanKJ2015}. The ability of our methods to directly optimize such complex performance measures will be indicative of their utility in terms of the freedom they allow the user to design objectives in a data- and task-specific manner.\\

\noindent\textbf{CQReward and BKReward} These hybrid performance measures are defined as $\text{CQReward}=\frac{\ba}{2-\nss}$ and $\text{BKReward}=\frac{\ba}{1+\kld}$. Notice that both performance measures are optimized when the numerator i.e. \ba is large, and the denominator is small which translates to \nss being large for CQReward and \kld being small for BKReward. Clearly, both performance measures encourage good performance with respect to both classification and quantification and penalize a predictor which either neglects quantification to get better classification performance, or the other way round.\\

The past section has seen us introduce a wide variety of quantification and hybrid performance measures. Of these, the Neg\kld, \nss, and Q-measure were already prevalent in quantification literature and we introduced \bakld, CQReward and BKReward. As discussed before, the aim of exploring such a large variety of performance measures is to both demonstrate the utility of our methods with respect to the quantification problem, and present newer ways of designing hybrid performance measures that give the user more expressivity in tailoring the performance measure to the task at hand.

We also note that these performance measures have extremely diverse and complex structures. We can show that Neg\kld, Q-measure, and \bakld are nested concave functions, more specifically, concave functions of functions that are themselves concave in the confusion matrix of the predictor. On the other hand, CQReward and BKReward turn out to be pseudo-concave functions of the confusion matrix. Thus, we are working with two very different families of performance measures here, each of which has different properties and requires different optimization techniques.\\

In the following section, we introduce two novel methods to optimize these two families of performance measures.

\section{Stochastic Optimization Methods for Quantification}

The previous discussion in Sections \ref{intro} and \ref{relatedwork} clarifies two aspects of efforts in the quantification literature. Firstly, specific performance measures have been developed and adopted for evaluating quantification performance including KLD, NSS, Q-measure etc. Secondly, algorithms that \emph{directly} optimize these performance measures are desirable, as is evidenced by recent works \cite{Barranquero:2015fr, Esuli:2010fk, Esuli:2015gh, Milli:2013fk}.

The works mentioned above make use of tools from optimization literature to learn linear (e.g. \cite{Esuli:2015gh}) and non-linear (e.g. \cite{Milli:2013fk}) models to perform quantification. The state of the art efforts in this direction have adopted the structural SVM approach for optimizing these performance measures with great success \cite{Barranquero:2015fr, Esuli:2015gh}. However, this approach comes with severe drawbacks.

The structural SVM \cite{Joachims05}, although a significant tool that allows optimization of arbitrary performance measures, suffers from two key drawbacks. Firstly, the structural SVM surrogate is not necessarily a tight surrogate for all performance measures, something that has been demonstrated in past literature \cite{KarNJ2015,NarasimhanKJ2015}, which can lead to poor training. But more importantly, optimizing the structural SVM surrogate requires the use of expensive cutting plane methods which are known to scale poorly with the amount of training data, as well as are unable to handle streaming data.

To alleviate these problems, we propose stochastic optimization algorithms that \emph{directly} optimize a large family of quantification performance measures. Our methods come with sound theoretical convergence guarantees, are able to operate with streaming data sets and, as our experiments will demonstrate, offer much faster and accurate quantification performance on a variety of data sets.

Our optimization techniques introduce crucial advancements in the field of stochastic optimization of \emph{multivariate performance measures} and address the two families of performance measures discussed while concluding Section~\ref{formulation} -- 1) nested concave performance measures and 2) pseudo-concave performance measures. We describe these in turn below.

\subsection{Nested Concave Performance Measures}

\noindent The first class of performance measures that we deal with are concave combinations of concave performance measures. More formally, given three concave functions $\Psi, \zeta_1, \zeta_2: \R^2 \rightarrow \R$, we can define a performance measure
\[
\Pf_{(\Psi, \zeta_1, \zeta_2)}(\w) = \Psi(\zeta_1(\w),\zeta_2(\w)),
\]
where we have
\begin{align*}
\zeta_1(\w) &= \zeta_1(P(\w),N(\w))\\
\zeta_2(\w) &= \zeta_2(P(\w),N(\w)),
\end{align*}
where $P(\w)$ and $N(\w)$ can respectively denote, either the TPR and TNR values or surrogate reward functions therefor. Examples of such performance measures include the negative KLD performance measure and the QMeasure which are described in Section~\ref{evaluationmeasures}. Table~\ref{tab:nestedexamples} describes these performance measures in canonical form i.e. their expressions in terms of TPR and TNR values.

\begin{table*}[t]
\caption{A list of nested concave performance measures and their canonical expressions in terms of the confusion matrix $\Psi(P,N)$ where $P$ and $N$ denote the TPR, TNR values and $p$ and $n$ denote the proportion of positives and negatives in the population. The 4th, 6th and 8th columns give the closed form updates used in steps 15-17 in Algorithm~\ref{algo:nemsis}.}
\centering
\hspace*{-6ex}
\small{
\begin{tabular}{|c|c|c|c|c|c|c|c|}
\hline
Name & Expression & $\Psi(x,y)$ & $\vgamma(\qew)$ & $\zeta_1(P,N)$ & $\valpha(\rew)$ & $\zeta_2(P,N)$ & $\vbeta(\rew)$\\
\hline
NegKLD \cite{Barranquero:2015fr,Esuli:2015gh}& $\scriptstyle-\kld(p,\hat p)$ & $\scriptstyle p\cdot x + n\cdot y$ & $\scriptstyle (p,n)$ & $\scriptstyle\log(p P + n(1 - N))$ & $\scriptstyle \br{\frac{1}{r_1},\frac{1}{r_2}}$ & $\scriptstyle\log(n N + p(1 - P))$ & $\scriptstyle \br{\frac{1}{r_1},\frac{1}{r_2}} $\\\hline
QMeasure$_\beta$\cite{Barranquero:2015fr} & $\frac{(1 + \beta^2)\cdot\ba\cdot\nss}{\beta^2\cdot\ba + \nss}$ & $\scriptstyle\frac{(1 + \beta^2)\cdot x\cdot y}{\beta^2\cdot x + y}$ & $\substack{(1+\beta^2)\cdot\br{z^2,\frac{(1-z)^2}{\beta^2}}\\z = \frac{q_2}{\beta^2q_1 + q_2}}$ & $\scriptstyle\frac{P + N}{2}$ & $\scriptstyle(\frac{1}{2},\frac{1}{2})$ & $\scriptstyle1 - (p(1-P) - n(1-N))^2$ & $\scriptstyle\substack{2(z,-z)\\z = r_2 - r_1}$\\
\hline
\bakld & $\scriptstyle C\cdot\ba + (1-C)\cdot(-\kld)$ & $\scriptstyle C\cdot x + (1-C)\cdot y$ & $\scriptstyle (C,1-C)$ & $\scriptstyle\frac{P + N}{2}$ & $\scriptstyle(\frac{1}{2},\frac{1}{2})$ & $\scriptstyle-\kld(P,N)$ & see above \\
\hline
\end{tabular}
}
\label{tab:nestedexamples}
\end{table*}

Before describing our algorithm for nested concave measures, we recall the notion of concave Fenchel conjugate of concave functions. For any concave function $f: \R^2 \rightarrow \R$ and any $(u,v) \in \R^2$, the (concave) Fenchel conjugate of $f$ is defined as
\[
f^\ast(u,v) = \inf_{(x,y) \in \R^2} \bc{ux + vy - f(x,y)}.
\]
Clearly, $f^\ast$ is concave. Moreover, it follows from the concavity of $f$ that for any $(x,y) \in \R^2$,
\[
f(x,y) = \inf_{(u,v) \in \R^2} \bc{xu + yv  - f^\ast(u,v)}.
\]

Below we state the properties of strong concavity and smoothness. These will be crucial in our convergence analysis.

\begin{defn}[Strong Concavity and Smoothness]
A function $f: \R^d \rightarrow \R$ is said to be $\alpha$-strongly concave and $\gamma$-smooth if for all $\x,\y \in \R^d$, we have
\[
-\frac{\gamma}{2}\norm{\x-\y}_2^2 \leq f(\x) - f(\y) - \ip{\nabla f(\y)}{\x - \y} \leq -\frac{\alpha}{2}\norm{\x-\y}_2^2.
\]
\end{defn}

We will assume that the functions $\Psi, \zeta_1$, and $\zeta_2$ defining our performance measures are $\gamma$-smooth for some constant $\gamma > 0$. This is true of all functions, save the $\log$ function which is used in the definition of the \kld quantification measure. However, if we carry out the smoothing step pointed out in Section~\ref{evaluationmeasures} with some $\epsilon > 0$, then it can be shown that the \kld function does become $\O{\frac{1}{\epsilon^2}}$-smooth. An important property of smooth functions, that would be crucial in our analyses, is a close relationship between smooth and strongly convex functions

\begin{thm}[\cite{Zalinescu2002}]
\label{thm:scss}
A closed, concave function $f$ is $\beta$ smooth iff its (concave) Fenchel conjugate $f^\ast$ is $\frac{1}{\beta}$-strongly concave.
\end{thm}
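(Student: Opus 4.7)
My plan is to reduce the concave-to-concave duality to its classical convex-to-convex counterpart via a negation argument, then invoke (and sketch) the standard Fenchel-Young proof of the latter.

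First I would set $g := -f$, which is closed convex because $f$ is closed concave. Manipulating the $\inf$ in the definition of the concave Fenchel conjugate yields the identity
\[
f^\ast(u,v) = \inf_{(x,y)}\bc{ux + vy - f(x,y)} = -g^\circ(-u,-v),
\]
where $g^\circ(w,z) = \sup_{(x,y)}\bc{wx + zy - g(x,y)}$ is the usual convex Fenchel conjugate. The two-sided quadratic sandwich that defines $\beta$-smoothness of the concave $f$ flips sign under negation and becomes exactly the standard quadratic upper bound defining $\beta$-smoothness of the convex $g$. Similarly, $\frac{1}{\beta}$-strong concavity of $f^\ast$ is equivalent to $\frac{1}{\beta}$-strong convexity of $(u,v)\mapsto g^\circ(-u,-v)$, and since $(u,v)\mapsto(-u,-v)$ is an affine isometry it preserves quadratic moduli, so this is in turn equivalent to $\frac{1}{\beta}$-strong convexity of $g^\circ$ itself.

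This reduction collapses the claim to the classical fact that a closed convex $g$ on $\R^d$ is $\beta$-smooth iff $g^\circ$ is $\frac{1}{\beta}$-strongly convex. I would prove this via the Fenchel-Young correspondence $u \in \partial g(x) \iff x \in \partial g^\circ(u) \iff g(x) + g^\circ(u) = \ip{u}{x}$. For the forward direction, $\beta$-smoothness forces $\partial g = \bc{\nabla g}$ with $\nabla g$ being $\beta$-Lipschitz; the correspondence then makes $\partial g^\circ$ single-valued wherever the primal gradient map is surjective, and combining the primal smooth upper bound at $x(u) \in \partial g^\circ(u)$ and $x(v) \in \partial g^\circ(v)$ with two copies of the Fenchel-Young equality delivers the dual lower bound $g^\circ(v) \geq g^\circ(u) + \ip{x(u)}{v-u} + \frac{1}{2\beta}\norm{v-u}^2$ that witnesses $\frac{1}{\beta}$-strong convexity. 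The converse runs the same argument with the roles of $g$ and $g^\circ$ swapped, using biconjugacy $g^{\circ\circ} = g$ for closed convex $g$.

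The hard part is the calculation inside the forward direction: converting a primal quadratic upper bound into a dual quadratic lower bound. The slick move is to express $g^\circ(v) - g^\circ(u) - \ip{x(u)}{v-u}$ using Fenchel-Young equality at $v$ and inequality at $u$, which produces a residual of the form $g(x(u)) - g(x(v)) + \ip{\nabla g(x(u))}{x(v) - x(u)}$; the $\beta$-smooth upper bound controls this below by $-\tfrac{\beta}{2}\norm{x(u) - x(v)}^2$, and the identity $\norm{\nabla g(x(u)) - \nabla g(x(v))} = \norm{u - v}$ combined with the Lipschitz-inverse behavior of $\partial g^\circ$ converts the primal norm to the dual norm, yielding the required $\tfrac{1}{2\beta}\norm{u-v}^2$ margin. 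Some care with effective domains is needed in full generality, but since the paper cites Zalinescu's monograph it suffices, after the negation reduction, to invoke the classical result rather than redo this computation in detail.
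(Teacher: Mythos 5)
The paper never proves this statement --- it is imported verbatim from Zalinescu's monograph --- so the only question is whether your argument stands on its own. Your reduction is correct: with $g=-f$ one indeed has $f^\ast(u,v) = -g^\circ(-u,-v)$ for the usual convex conjugate $g^\circ$, the sign flip turns the concave smoothness sandwich into the standard convex one, and since $(u,v)\mapsto(-u,-v)$ is a linear isometry it preserves the strong convexity modulus; stopping there and citing the classical convex equivalence (which is exactly what the paper does by citing Zalinescu) would be a complete and legitimate proof.

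The problem is in your sketch of the ``hard part''. After applying the Fenchel--Young equalities at $u$ and $v$, the residual is $g^\circ(v)-g^\circ(u)-\ip{x(u)}{v-u} = \ip{v-u}{x(v)-x(u)} + g(x(u)) - g(x(v))$, i.e. $\ip{v-u}{x(v)-x(u)}$ minus the Bregman divergence $D_g(x(v),x(u))$; the expression you identify as the residual is only the second piece, which is \emph{nonpositive} by convexity and so cannot by itself supply the positive margin $\frac{1}{2\beta}\norm{v-u}^2$. The finishing move also points the wrong way: $\beta$-Lipschitzness of $\nabla g$ gives $\norm{u-v} \leq \beta\norm{x(u)-x(v)}$, hence $-\frac{\beta}{2}\norm{x(u)-x(v)}^2 \leq -\frac{1}{2\beta}\norm{u-v}^2$, which is an upper bound where you need a lower bound (and for merely smooth, not strongly convex, $g$ the ratio $\norm{x(u)-x(v)}/\norm{u-v}$ is unbounded). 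Two standard repairs: (i) substitute the smoothness bound $g(x)\leq g(x(u)) + \ip{u}{x-x(u)} + \frac{\beta}{2}\norm{x-x(u)}^2$ directly into $g^\circ(v)=\sup_x\bc{\ip{v}{x}-g(x)}$ and maximize the resulting quadratic in $x$, which yields $g^\circ(v)\geq g^\circ(u)+\ip{x(u)}{v-u}+\frac{1}{2\beta}\norm{v-u}^2$ without ever invoking $x(v)$; or (ii) keep your two-point bookkeeping but replace plain convexity/smoothness by the refined inequality $g(y)\geq g(x)+\ip{\nabla g(x)}{y-x}+\frac{1}{2\beta}\norm{\nabla g(y)-\nabla g(x)}^2$ valid for $\beta$-smooth convex $g$. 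With either repair (plus biconjugacy $g^{\circ\circ}=g$, which uses closedness and is preserved by your reduction, for the converse) the proof is correct.
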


We are now in a position to present our algorithm \nemsis for stochastic optimization of nested concave functions. Algorithm~\ref{algo:nemsis} gives an outline of the technique. We note that a direct application of traditional stochastic optimization techniques \cite{pegasos-paper} to such nested performance measures as those considered here is not possible as discussed before. \nemsis, overcomes these challenges by exploiting the nested dual structure of the performance measure by carefully balancing updates at the inner and outer levels.

At every time step, \nemsis performs four very cheap updates. The first update is a \emph{primal} ascent update to the model vector which takes a weighted stochastic gradient descent step. Note that this step involves a \emph{projection} step to the set of model vectors $\W$ denoted by $\Pi_\W(\cdot)$. In our experiments $\W$ was defined to be the set of all Euclidean norm-bounded vectors so that projection could be effected using Euclidean normalization which can be done in $\O{d}$ time if the model vectors are $d$-dimensional.

The weights of the descent step are decided by the dual parameters of the functions $\Psi, \zeta_1$, and $\zeta_2$. Then \nemsis updates the dual variables in three simple steps. In fact line numbers 15-17 can be executed in closed form (see Table~\ref{tab:nestedexamples}) for all the performance measures we see here which allows for very rapid updates. See Appendix~\ref{app:closed-form} for the simple derivations.

\begin{algorithm}[t]
	\caption{\small \nemsis: NEsted priMal-dual StochastIc updateS}
	\label{algo:nemsis}
	\begin{algorithmic}[1]
		\small{
			\REQUIRE Outer wrapper function $\Psi$, inner performance measures $\zeta_1, \zeta_2$, step sizes $\eta_t$, feasible sets $\W, \A$
			\ENSURE Classifier $\w \in \W$
			\STATE $\w_0 \< \vz, t \< 0, \bc{\rew_0 , \qew_0, \valpha_0, \vbeta_0, \vgamma_0} \< (0,0)$
			\WHILE{data stream has points}
				\STATE Receive data point $(\x_t,y_t)$
				\STATE \COMMENT{Perform primal ascent}
				\IF{$y_t > 0$}
					\STATE $\w_{t+1} \< \Pi_\W\br{\w_t + \eta_t(\gamma_{t,1}\alpha_{t,1} + \gamma_{t,2}\beta_{t,1})\nabla_\w r^+(\w_t; \x_t, y_t)}$
					\STATE $\qew_{t+1} \leftarrow t\cdot\qew_t + (\alpha_{t,1}, \beta_{t,1})\cdot r^+(\w_t; \x_t, y_t)$
				\ELSE
					\STATE $\w_{t+1} \< \Pi_\W\br{\w_t + \eta_t(\gamma_{t,1}\alpha_{t,2} + \gamma_{t,2}\beta_{t,2})\nabla_\w r^-(\w_t; \x_t, y_t)}$
					\STATE $\qew_{t+1} \leftarrow t\cdot\qew_t + (\alpha_{t,2}, \beta_{t,2})\cdot r^-(\w_t; \x_t, y_t)$
				\ENDIF
				\STATE $\rew_{t+1} \< (t+1)^{-1}\br{t\cdot\rew_t + (r^+(\w_t; \x_t, y_t), r^-(\w_t; \x_t, y_t))}$
				\STATE $\qew_{t+1} \< (t+1)^{-1}\br{\qew_{t+1} - (\zeta_1^\ast(\valpha_t),\zeta_2^\ast(\vbeta_t))}$
				\STATE \COMMENT{Perform dual updates}
				\STATE $\valpha_{t+1} = \underset{\valpha}{\arg\min}\bc{\valpha\cdot\rew_{t+1} - \zeta_1^\ast(\valpha)}$
				\STATE $\vbeta_{t+1} = \underset{\vbeta}{\arg\min}\bc{\vbeta\cdot\rew_{t+1} - \zeta_2^\ast(\vbeta)}$
				\STATE $\vgamma_{t+1} = \underset{\vgamma}{\arg\min}\bc{\vgamma\cdot\qew_{t+1} - \Psi^\ast(\vgamma)}$
				\STATE $t \< t+1$
			\ENDWHILE
			\STATE \textbf{return} $\barw = \frac{1}{t}\sum_{\tau = 1}^{t}\w_\tau$
		}
	\end{algorithmic}
\end{algorithm}

Below we state the convergence proof for \nemsis. We note that despite the complicated nature of the performance measures being tackled, \nemsis is still able to recover the optimal rate of convergence known for stochastic optimization routines. We refer the reader to Appendix~\ref{app:thm:nemsis-proof} for a proof of this theorem. The proof requires a careful analysis of the primal and dual update steps at different levels and tying the updates together by taking into account the nesting structure of the performance measure.

\begin{thm}
\label{THM:NEMSIS-ANALYSIS}
Suppose we are given a stream of random samples $(\x_1,y_1),\ldots,(\x_T,y_T)$ drawn from a distribution $\D$ over $\X\times\Y$. Let Algorithm~\ref{algo:nemsis} be executed with step sizes $\eta_t = \Theta(1/\sqrt t)$ with a nested concave performance measure $\Psi(\zeta_1(\cdot),\zeta_2(\cdot))$. Then, for some universal constant $C$, the average model $\barw = \frac{1}{T}\sum_{t=1}^T \w_t$ output by the algorithm satisfies, with probability at least $1 - \delta$,
\[
\Pf_{(\Psi, \zeta_1, \zeta_2)}(\barw) \geq \sup_{\w^\ast \in \W}\Pf_{(\Psi, \zeta_1, \zeta_2)}(\w^\ast) - C_{\Psi,\zeta_1,\zeta_2,r}\cdot\br{\frac{\log\frac{1}{\delta}}{\sqrt{T}}},
\]
where $C_{\Psi,\zeta_1,\zeta_2,r} = C(L_\Psi (L_r+B_r)(L_{\zeta_1}+L_{\zeta_2}))$ for a universal constant $C$ and $L_g$ denotes the Lipschitz constant of the function $g$.
\end{thm}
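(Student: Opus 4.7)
The plan is to use Fenchel conjugacy to unfold both nested layers of the performance measure into a flat saddle-point objective that is concave in $\w$ and jointly linear in $(P(\w),N(\w))$ for any fixed setting of the three dual blocks. Once in saddle-point form, I would run a standard online primal-dual regret argument: treat the primal moves as stochastic gradient ascent on the Lagrangian and the dual moves as best-responses, then combine the two regret bounds and pass to the averaged iterate via Jensen.

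\textbf{Dualization.} By concavity of $\Psi$, $\zeta_1$, $\zeta_2$, repeated application of the Fenchel duality identity $f(\x) = \inf_{\u}\bc{\ip{\u}{\x} - f^\ast(\u)}$ yields
\[
\Pf_{(\Psi,\zeta_1,\zeta_2)}(\w) = \inf_{\vgamma,\valpha,\vbeta}\bc{(\gamma_1\alpha_1+\gamma_2\beta_1)P(\w) + (\gamma_1\alpha_2+\gamma_2\beta_2)N(\w) - \gamma_1\zeta_1^\ast(\valpha) - \gamma_2\zeta_2^\ast(\vbeta) - \Psi^\ast(\vgamma)},
\]
call this inner expression $\L(\w;\vgamma,\valpha,\vbeta)$. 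For this to remain concave in $\w$ the feasible set $\A$ must restrict $\vgamma$ to the non-negative orthant, which matches all entries in Table~\ref{tab:nestedexamples}. Observe that the primal update in lines 6 and 9 of Algorithm~\ref{algo:nemsis} is exactly a stochastic ascent step on $\L$ using the unbiased gradient estimator $\nabla_\w r^\pm(\w_t;\x_t,y_t)$, while lines 15--17 are exact best-response (argmin) moves on the three dual blocks against the running averages $\rew_t$ and $\qew_t$. Thus the algorithm is a faithful implementation of primal-dual online optimization on the saddle objective.

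\textbf{Regret decomposition.} I would bound the suboptimality by the usual primal-plus-dual decomposition. For the primal, a Zinkevich-style online gradient analysis with $\eta_t = \Theta(1/\sqrt t)$ yields, for every fixed $\w^\ast \in \W$,
\[
\tfrac{1}{T}\textstyle\sum_{t=1}^T \L(\w^\ast;\vgamma_t,\valpha_t,\vbeta_t) - \tfrac{1}{T}\textstyle\sum_{t=1}^T \L(\w_t;\vgamma_t,\valpha_t,\vbeta_t) \leq \O{\tfrac{L_\Psi(L_{\zeta_1}+L_{\zeta_2})(L_r+B_r)}{\sqrt T}},
\]
with the martingale difference $\nabla_\w r^\pm - \nabla_\w P / \nabla_\w N$ controlled by Azuma-Hoeffding to bring in the $\log(1/\delta)$ factor. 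For the dual side, since the best-response steps are argmins of convex problems whose objectives are $\O{1/\gamma}$-strongly convex (by Theorem~\ref{thm:scss} applied to the smoothness of $\Psi,\zeta_1,\zeta_2$), the argmin mappings are Lipschitz, so the ``dual regret'' reduces to bounding the deviation of the running averages $\rew_t$ and $\qew_t$ from their expectations $(\bar P_t,\bar N_t)$ and $(\bar\zeta_{1,t},\bar\zeta_{2,t})$; a second Azuma-Hoeffding tail yields an $\O{\log(1/\delta)/\sqrt T}$ bound here as well. Combining the two and using the identity $\Pf(\w_t) = \inf_{\vgamma,\valpha,\vbeta}\L(\w_t;\cdot)$ gives $\tfrac{1}{T}\sum_t \Pf(\w_t) \geq \sup_{\w^\ast}\Pf(\w^\ast) - \O{\log(1/\delta)/\sqrt T}$, and finally a Jensen step using concavity of $\Pf$ in $\w$ (which holds because $\vgamma \geq \vz$) transfers the bound from the per-iterate average to $\barw$.

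\textbf{Main obstacle.} The hard part will be handling the coupling between levels: the outer dual $\vgamma_t$ is chosen by best-responding to $\qew_t$, which is itself built from the past inner duals $\valpha_\tau,\vbeta_\tau$ and past rewards, so the three martingale arguments are not independent and must be chained. The chaining only produces the advertised $1/\sqrt T$ rate (rather than a degraded $T^{-1/4}$ rate) because smoothness of the outer functions, through Theorem~\ref{thm:scss}, gives strong concavity of the conjugates, which in turn converts deviations in $\rew_t,\qew_t$ linearly (not square-rootily) into deviations of the best-response duals. A minor issue is that the reward estimators $r^\pm$ carry a $1/p$ and $1/(1-p)$ factor, so every constant in the concentration argument will scale inversely with $\min\{p,1-p\}$; this can be absorbed into the universal constant $C_{\Psi,\zeta_1,\zeta_2,r}$ in the statement.
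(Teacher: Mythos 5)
Your skeleton---unfolding both levels via Fenchel conjugates, reading lines 6/9 of Algorithm~\ref{algo:nemsis} as stochastic gradient ascent on the resulting Lagrangian and lines 15--17 as dual best responses, then combining a Zinkevich-style primal regret bound with martingale concentration and a final Jensen step---is essentially the architecture of the paper's proof, and your lower-bound half is sound: primal online gradient ascent regret, an online-to-batch conversion, and the trivial fact that the population Lagrangian at \emph{any} dual dominates $\inf$ over duals, i.e.\ $\Pf_{(\Psi,\zeta_1,\zeta_2)}(\w^\ast)$. The genuine gap is on the dual side. What is needed there is a bound on the average loss incurred by the \emph{played} duals $(\vgamma_t,\valpha_t,\vbeta_t)$ against the best \emph{fixed} dual in hindsight, chained through the two levels (with $\vgamma\geq\vz$ preserving inequality directions). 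The paper obtains this from a follow-the-leader regret lemma (Lemma~\ref{lem:ftl-reg}): smoothness of $\Psi,\zeta_1,\zeta_2$ makes the conjugate-based losses strongly convex, FTL on strongly convex losses has $\O{\log T}$ regret, and this is applied once per dual block. Your substitute---``the argmin map is Lipschitz, so it suffices that the running averages $\rew_t,\qew_t$ concentrate around their expectations''---does not deliver this: the running averages have no fixed expectation to concentrate to (they drift because the iterates $\w_t$ themselves change over the run), and Lipschitzness of the best response only says that $\valpha_t,\vbeta_t,\vgamma_t$ track $\rew_t,\qew_t$; it does not compare the cumulative dual losses to those of the hindsight-optimal dual. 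The stability-plus-summation accounting of the FTL argument (per-step dual movement $\O{1/t}$, summing to $\O{\log T}$) is the missing ingredient, and it is precisely what lets the overall rate stay at $1/\sqrt T$.

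Relatedly, your concluding chain ``$\frac{1}{T}\sum_t\Pf_{(\Psi,\zeta_1,\zeta_2)}(\w_t)\geq\sup_{\w^\ast}\Pf_{(\Psi,\zeta_1,\zeta_2)}(\w^\ast)-\O{\log(1/\delta)/\sqrt T}$, then Jensen to $\barw$'' cannot be reached by the stated steps. Because the duals respond to running averages of \emph{past} rewards rather than to $\w_t$, one only has $\L(\w_t;\vgamma_t,\valpha_t,\vbeta_t)\geq\inf_{\vgamma,\valpha,\vbeta}\L(\w_t;\cdot)=\Pf_{(\Psi,\zeta_1,\zeta_2)}(\w_t)$ with no per-round control on this duality gap, so the primal regret bound does not translate into a lower bound on the average of per-iterate performance values. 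The paper avoids this by bounding the averaged played Lagrangian (its quantity $(A)/T$) by the nested measure evaluated at the \emph{averaged} rewards---via the FTL bound plus a martingale step---and then passing \emph{directly} to $\Pf_{(\Psi,\zeta_1,\zeta_2)}(\barw)$ using concavity of $P,N$ and monotonicity of $\zeta_1,\zeta_2$ (and $\vgamma\geq\vz$); Jensen enters at the level of the confusion-matrix surrogates, not at the level of per-iterate values of $\Pf$. Repairing your write-up therefore requires (i) replacing the concentration heuristic by an FTL/be-the-leader regret argument for each dual block, and (ii) rerouting the conclusion through the value at the averaged rewards rather than through $\frac{1}{T}\sum_t\Pf_{(\Psi,\zeta_1,\zeta_2)}(\w_t)$.
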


\textit{Related work of Narasimhan \etal}: Narasimhan \etal \cite{NarasimhanKJ2015} proposed an algorithm \pdsgd which offered stochastic optimization of concave performance measures. We note that although the performance measures considered here are indeed concave, it is difficult to apply \pdsgd to them directly since \pdsgd requires computation of gradients of the Fenchel dual of the function $\Pf_{(\Psi, \zeta_1, \zeta_2)}$ which are difficult to compute given the nested structure of this function. \nemsis, on the other hand, only requires the duals of the individual functions $\Psi,\zeta_1$, and $\zeta_2$ which are much more accessible. Moreover, \nemsis uses a much simpler dual update which does not involve any parameters and, in fact, has a closed form solution in all our cases. \pdsgd, on the other hand, performs dual gradient descent which requires a fine tuning of yet another step length parameter. A third benefit of \nemsis is that it achieves a logarithmic regret with respect to its dual updates (see the proof of Theorem~\ref{THM:NEMSIS-ANALYSIS}) whereas \pdsgd incurs a polynomial regret due to its gradient descent-style dual update.

\subsection{Pseudo-concave Performance Measures}

\noindent The next class of performance measures we consider can be expressed as a ratio of a quantification and a classification performance measure. More formally, given a \emph{convex} quantification performance measure $\Pf_\quant$ and a \emph{concave} classification performance measure $\Pf_\class$, we can define a performance measure
\[
\Pf_{(\Pf_\quant,\Pf_\class)}(\w) = \frac{\Pf_\class(\w)}{\Pf_\quant(\w)},
\]
We assume that both the performance measures, $\Pf_\quant$ and $\Pf_\class$, are positive valued. Such performance measures can be very useful in allowing a system designer to balance classification and quantification performance. Moreover, the form of the measure allows an enormous amount of freedom in choosing the quantification and classification performance measures. Examples of such performance measures include the CQReward and the BKReward measures. These were introduced in Section~\ref{evaluationmeasures} and are represented in their canonical forms in Table~\ref{tab:pscexamples}.

\begin{table}[t]
\caption{List of pseudo-concave performance measures and their canonical expressions in terms of the confusion matrix $\Psi(P,N)$. Note that $p$ and $n$ denote the proportion of positives and negatives in the population.}
\centering
\small{
\begin{tabular}{|c|c|c|c|}
\hline
Name & Expression & $\Pf_\quant(P,N)$ & $\Pf_\class(P,N)$\\
\hline
CQReward & $\frac{\ba}{2 - \nss}$ & $\scriptstyle1 + (p(1-P) - n(1-N))^2$ & $\frac{P + N}{2}$ \\
\hline
BKReward & $\frac{\ba}{1 + \kld}$ & \kld: see Table~\ref{tab:nestedexamples} & $\frac{P + N}{2}$ \\
\hline
\end{tabular}
}
\label{tab:pscexamples}
\end{table}

Performance measures, constructed the way described above, with a ratio of a concave over a convex measures, are called \emph{pseudo-concave} measures. This is because, although these functions are not concave, their level sets are still convex which makes it possible to optimize them efficiently. To see the intuition behind this, we need to introduce the notion of the \emph{valuation function} corresponding to the performance measure. As a passing note, we remark that because of the non-concavity of these performance measures, \nemsis cannot be applied here.

\begin{defn}[Valuation Function]
The valuation of a pseudo-concave performance measure $\Pf_{(\Pf_\quant,\Pf_\class)}(\w) = \frac{\Pf_\class(\w)}{\Pf_\quant(\w)}$ at any level $v > 0$, is defined as
\[
V(\w,v) = \Pf_\class(\w) - v\cdot\Pf_\quant(\w)
\]
\end{defn}

It can be seen that the valuation function defines the level sets of the performance measure. To see this, notice that due to the positivity of the functions $\Pf_\quant$ and $\Pf_\class$, we can have $\Pf_{(\Pf_\quant,\Pf_\class)}(\w) \geq v$ iff $V(\w,v) \geq 0$. However, since $\Pf_\class$ is concave, $\Pf_\quant$ is convex, and $v > 0$, $V(\w,v)$ is a concave function of $\w$.
 
This close connection between the level sets and notions of valuation functions have been exploited before to give optimization algorithms for pseudo-linear performance measures such as the F-measure \cite{NarasimhanKJ2015,ParambathUG14}. These approaches treat the valuation function as some form of proxy or surrogate for the original performance measure and optimize it in hopes of making progress with respect to the original measure.

Taking this approach with our performance measures yields a very natural algorithm for optimizing pseudo-concave measures which we outline in the \scanfc algorithm Algorithm~\ref{alg:scanfc}. \scanfc repeatedly trains models to optimize their valuations at the current level, then upgrades the level itself. Notice that step 4 in the algorithm is a concave maximization problem over a convex set, something that can be done using a variety of methods -- in the following we will see how \nemsis can be used to implement this step. Also notice that step 5 can, by the definition of the valuation function, be carried out by simply setting $v_{t+1} = \Pf_{(\Pf_\quant,\Pf_\class)}(\w_{t+1})$.

\begin{algorithm}[t]
	\caption{\small \scanfc: Concave AlternatioN}
	\label{alg:scanfc}
	\begin{algorithmic}[1]
		\small{
			\REQUIRE Objective $\Pf_{(\Pf_\quant,\Pf_\class)}$, model space $\W$, tolerance $\epsilon$
			\ENSURE An $\epsilon$-optimal classifier $\w \in \W$
			\STATE Construct the valuation function $V$
			\STATE $\w_0 \< \vz, t \< 1$
			\WHILE{$v_t > v_{t-1} + \epsilon$}
				\STATE $\w_{t+1} \< \mathop{\arg\max}_{\w\in\W} V(\w,v_t)$
				\STATE $v_{t+1} \< \mathop{\arg\max}_{v > 0} v$ such that $V(\w_{t+1},v) \geq v$
				\STATE $t \< t+1$
			\ENDWHILE
			\STATE \textbf{return} $\w_t$
		}
	\end{algorithmic}
\end{algorithm}

It turns out that \scanfc has a linear rate of convergence for well-behaved performance measures. The next result formalizes this statement. We note that this result is similar to the one arrived by \cite{NarasimhanKJ2015} but only for \emph{pseudo-linear} functions.

\begin{thm}
\label{THM:SCANFC-CONV}
Suppose we execute Algorithm~\ref{alg:scanfc} with a pseudo-concave performance measure $\Pf_{(\Pf_\quant,\Pf_\class)}$ such that the quantification performance measure always takes values in the range $[m,M]$, where $M > m > 0$.  Let $\Pf^\ast := \sup_{\w\in\W}\Pf_{(\Pf_\quant,\Pf_\class)}(\w)$ be the optimal performance level and $\Delta_t = \Pf^\ast - \Pf_{(\Pf_\quant,\Pf_\class)}(\w_t)$ be the excess error for the model $\w_t$ generated at time $t$. Then, for every $t > 0$, we have $\Delta_t \leq \Delta_0\cdot\br{1-\frac{m}{M}}^t$.
\end{thm}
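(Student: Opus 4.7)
The plan is to set up a one-step recurrence of the form $\Delta_{t+1} \leq (1 - m/M)\Delta_t$ and then iterate. Throughout, let $\w^\ast$ denote any optimum so $\Pf^\ast = \Pf_\class(\w^\ast)/\Pf_\quant(\w^\ast)$, and note that by construction of Step 5, once $t \geq 1$ we have $v_t = \Pf_{(\Pf_\quant,\Pf_\class)}(\w_t)$.

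The first step is to rewrite the valuation at $\w^\ast$ in terms of the current excess error. Using $\Pf_\class(\w^\ast) = \Pf^\ast \cdot \Pf_\quant(\w^\ast)$, we get
\[
V(\w^\ast, v_t) = \Pf_\class(\w^\ast) - v_t\cdot\Pf_\quant(\w^\ast) = \br{\Pf^\ast - v_t}\cdot\Pf_\quant(\w^\ast) = \Delta_t\cdot\Pf_\quant(\w^\ast) \geq m\cdot\Delta_t,
\]
where the last inequality uses the lower bound $\Pf_\quant(\w^\ast) \geq m$. Next, since $\w_{t+1}$ maximizes $V(\cdot, v_t)$ over $\W$, it must be at least as good as $\w^\ast$, giving $V(\w_{t+1}, v_t) \geq V(\w^\ast, v_t) \geq m\cdot\Delta_t$.

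The second step is to express $V(\w_{t+1}, v_t)$ in terms of the jump $v_{t+1}-v_t$. Factoring out $\Pf_\quant(\w_{t+1})$ and using $v_{t+1} = \Pf_{(\Pf_\quant,\Pf_\class)}(\w_{t+1}) = \Pf_\class(\w_{t+1})/\Pf_\quant(\w_{t+1})$,
\[
V(\w_{t+1}, v_t) = \Pf_\quant(\w_{t+1})\cdot\br{v_{t+1} - v_t} \leq M\cdot\br{v_{t+1} - v_t}.
\]
Combining the two inequalities yields $v_{t+1} - v_t \geq (m/M)\cdot\Delta_t$, which when rewritten via $\Delta_{t+1} = \Delta_t - (v_{t+1}-v_t)$ gives
\[
\Delta_{t+1} \leq \br{1 - \frac{m}{M}}\Delta_t.
\]
A straightforward induction on $t$ then delivers $\Delta_t \leq \Delta_0\cdot(1 - m/M)^t$.

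Most of the work is bookkeeping; there is no real analytic obstacle because concavity of $\Pf_\class$ and convexity of $\Pf_\quant$ are only used implicitly to guarantee that the valuation maximization in Step 4 is a well-defined concave program. The one subtle point to flag in the write-up is that the argument above is stated assuming Step 4 is solved exactly; if it is only solved to tolerance $\epsilon$, the recurrence becomes $\Delta_{t+1} \leq (1-m/M)\Delta_t + \epsilon/M$, and one should comment on how this interacts with the stopping criterion. Apart from that caveat, the proof is essentially the three-line chain above.
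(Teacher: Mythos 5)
Your proof is correct and follows essentially the same route as the paper's: your two displayed inequalities are exactly the paper's Claim~\ref{clm:boundprog} ($e_t \geq m\,\Delta_t$) and Claim~\ref{clm:suffprog} ($e_t \leq M(v_{t+1}-v_t)$), merely with the first derived directly from optimality of $\w_{t+1}$ against $\w^\ast$ instead of by contradiction. The only nit is that you assume the supremum $\Pf^\ast$ is attained at some $\w^\ast$; the paper's contradiction argument avoids this, and your version is easily patched by working with near-maximizers and letting the slack go to zero.
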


We refer the reader to Appendix~\ref{app:scanfcconv} for a proof of this theorem. This theorem generalizes the result of \cite{NarasimhanKJ2015} to the more general case of pseudo-concave functions. Note that for the pseudo-concave functions defined in Table~\ref{tab:pscexamples}, care is taken to ensure that the quantification performance measure satisfies $m > 0$.

A drawback of \scanfc is that it cannot operate in streaming data settings and requires a concave optimization oracle. However, we notice that for the performance measures in Table~\ref{tab:pscexamples}, the valuation function is always at least a nested concave function. This motivates us to use \nemsis to solve the inner optimization problems in an online fashion. Combining this with an online technique to approximately execute step 5 of of the \scanfc and gives us the \scan algorithm, outlined in Algorithm~\ref{alg:scan}.

Thoerem~\ref{THM:SCAN-CONV} shows that \scan enjoys a convergence rate similar to that of \nemsis. Indeed, \scan is able to guarantee an $\epsilon$-approximate solution after witnessing $\softO{1/\epsilon^2}$ samples which is equivalent to a convergence rate of $\softO{1/\sqrt T}$. The proof of this result is obtained by showing that \scanfc is robust to approximate solutions to the inner optimization problems. We refer the reader to Appendix~\ref{app:scanconv} for a proof of this theorem.

\begin{algorithm}[t]
	\caption{\small \scan: Stochastic Concave AlternatioN}
	\label{alg:scan}
	\begin{algorithmic}[1]
		\small{
			\REQUIRE Objective $\Pf_{(\Pf_\quant,\Pf_\class)}$, model space $\W$, step sizes $\eta_t$, epoch lengths $s_e, s'_e$
			\ENSURE Classifier $\w \in \W$
			\STATE $v_0\<0, t\<0, e\<0,\w_0\<\vz$
			\REPEAT
				\STATE \COMMENT{Learning phase}
				\STATE $\widetilde\w\<\w_e$
				\WHILE{$t < s_e$}
					\STATE Receive sample $(\x,y)$
					\STATE \COMMENT{\nemsis update with $V(\cdot,v_e)$ at time $t$}
					\STATE $\w_{t+1} \leftarrow$ \nemsis$(V(\cdot,v_e), \w_t, (\x,y), t)$
					\STATE $t\<t+1$
				\ENDWHILE
				\STATE $t\<0, e\<e+1, \w_{e+1}\<\widetilde\w$
				\STATE \COMMENT{Level estimation phase}
				\STATE $v_+\<0,v_-\<0$
				\WHILE{$t < s'_e$}
					\STATE Receive sample $(\x,y)$
					\STATE $v_y \< v_y + r^y(\w_e;\x,y)$\hspace*{5ex}\COMMENT{Collect rewards}
					\STATE $t\<t+1$
				\ENDWHILE
				\STATE $t\<0, v_e\< \frac{\Pf_\class(v_+,v_-)}{\Pf_\quant(v_+,v_-)}$
			\UNTIL{stream is exhausted}
			\STATE \textbf{return} $\w_e$
		}
	\end{algorithmic}
\end{algorithm}

\begin{thm}
\label{THM:SCAN-CONV}
Suppose we execute Algorithm~\ref{alg:scan} with a pseudo-concave performance measure $\Pf_{(\Pf_\quant,\Pf_\class)}$ such that $\Pf_\quant$ always takes values in the range $[m,M]$ with $m > 0$, with epoch lengths $s_e, s'_e = \frac{C_{\Psi,\zeta_1,\zeta_2,r}}{m^2}\br{\frac{M}{M-m}}^{2e}$ following a geometric rate of increase, where the constant $C_{\Psi,\zeta_1,\zeta_2,r}$ is the effective constant for the \nemsis analysis (Theorem~\ref{THM:NEMSIS-ANALYSIS}) for the inner invocations of \nemsis in \scan. Also let the excess error for the model $\w_e$ generated after $e$ epochs be denoted by $\Delta_e = \Pf^\ast - \Pf_{(\Pf_\quant,\Pf_\class)}(\w_e)$. Then after $e = \O{\log\br{\frac{1}{\epsilon}\log^2\frac{1}{\epsilon}}}$ epochs, we can ensure with probability at least $1-\delta$ that $\Delta_e \leq \epsilon$. Moreover, the number of samples consumed till this point, ignoring universal constants, is at most ${\frac{C_{\Psi,\zeta_1,\zeta_2,r}^2}{\epsilon^2}\br{\log\log\frac{1}{\epsilon}+\log\frac{1}{\delta}}\log^4{\frac{1}{\epsilon}}}$.
\end{thm}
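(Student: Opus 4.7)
The plan is to prove Theorem~\ref{THM:SCAN-CONV} by combining three ingredients: (a) the linear convergence of the exact \scanfc algorithm (Theorem~\ref{THM:SCANFC-CONV}), (b) the $\softO{1/\sqrt{T}}$ convergence of \nemsis when applied to the (nested concave) valuation function $V(\cdot,v_e) = \Pf_\class(\cdot) - v_e\cdot\Pf_\quant(\cdot)$ (Theorem~\ref{THM:NEMSIS-ANALYSIS}), and (c) a robustness argument showing that \scanfc tolerates both an approximate inner maximization and an approximate level-estimation step, with errors that decay at a rate matching the geometric schedule $s_e, s'_e \propto (M/(M-m))^{2e}$.

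The core step is the robust recursion. I would first establish the key inequality $V^\ast(v_{e-1}) := \max_\w V(\w,v_{e-1}) \geq m\cdot(\Pf^\ast - v_{e-1})$, which follows by evaluating $V(\cdot,v_{e-1})$ at the global optimum $\w^\ast$, using $\Pf_\class(\w^\ast) = \Pf^\ast\cdot\Pf_\quant(\w^\ast)$ and $\Pf_\quant(\w^\ast)\geq m$. Next, since the inner \nemsis invocation at epoch $e$ returns $\w_e$ with $V(\w_e,v_{e-1}) \geq V^\ast(v_{e-1}) - \alpha_e$ (with high probability, by Theorem~\ref{THM:NEMSIS-ANALYSIS}), and since $V(\w_e,\Pf_{(\Pf_\quant,\Pf_\class)}(\w_e)) = 0$ by definition, the bound $\Pf_\quant(\w_e) \leq M$ yields
\[
\Pf_{(\Pf_\quant,\Pf_\class)}(\w_e) - v_{e-1} \geq \frac{V(\w_e,v_{e-1})}{M} \geq \frac{m(\Pf^\ast - v_{e-1}) - \alpha_e}{M}.
\]
Folding in the level-estimation error $|v_e - \Pf_{(\Pf_\quant,\Pf_\class)}(\w_e)| \leq \beta_e$ (established via a Hoeffding bound on the $s'_e$ samples used to estimate $(P(\w_e),N(\w_e))$ and then propagating through the smooth ratio), this produces the error recursion
\[
\Delta_e \;\leq\; \br{1-\tfrac{m}{M}}\Delta_{e-1} + \tfrac{\alpha_e + L\,\beta_e}{M},
\]
where $L$ accounts for the Lipschitz constants of $\Pf_\class,\Pf_\quant$.

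The chosen schedule $s_e,s'_e = \Theta\br{(C_{\Psi,\zeta_1,\zeta_2,r}/m)^2\cdot(M/(M-m))^{2e}}$ ensures $\alpha_e + L\beta_e = \widetilde O\br{m\br{1-m/M}^e}$, after applying Theorem~\ref{THM:NEMSIS-ANALYSIS} with confidence $\delta_e = \delta/e^2$ and union-bounding over epochs. Unrolling the recursion (the telescoping factors $(1-m/M)^{e-i}\cdot(1-m/M)^i = (1-m/M)^e$ collapse) gives $\Delta_e \leq \br{1-m/M}^e\br{\Delta_0 + \widetilde O(e)}$, so choosing $e = O\br{\log\br{\tfrac{1}{\epsilon}\log^2\tfrac{1}{\epsilon}}}$ yields $\Delta_e \leq \epsilon$ as claimed. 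The sample-complexity bound then follows because $\sum_{i\leq e}(s_i+s'_i)$ is dominated by its last (geometrically largest) term $s_e + s'_e = \widetilde O\br{C_{\Psi,\zeta_1,\zeta_2,r}^2/\epsilon^2}$, with the extra $\log^4(1/\epsilon)$ factor absorbing the per-epoch $\log(1/\delta_e)$ terms from \nemsis and the level-estimation concentration.

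The main obstacle I anticipate is in step (c): carefully showing that the simple update $v_e \leftarrow \Pf_\class(v_+,v_-)/\Pf_\quant(v_+,v_-)$ in Algorithm~\ref{alg:scan} is a good surrogate for $\Pf_{(\Pf_\quant,\Pf_\class)}(\w_e)$, since the latter is a ratio and small additive errors in numerator and denominator can blow up when the denominator is small --- the assumption $m > 0$ is essential here, and one must bound the perturbation of a ratio by $O(\beta_e/m^2)$, which explains the $1/m^2$ appearing in the chosen epoch lengths. A secondary subtlety is ensuring that the high-probability guarantees of \nemsis compose over a growing number of epochs; this is handled by a union bound with the $\delta_e = \delta/e^2$ schedule.
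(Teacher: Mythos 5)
Your overall strategy is the same as the paper's: treat the inner \nemsis run and the level-estimation phase as a noisy execution of \scanfc, derive a perturbed version of the linear-convergence recursion of Theorem~\ref{THM:SCANFC-CONV}, invoke Theorem~\ref{THM:NEMSIS-ANALYSIS} per epoch with confidence $\delta_e$ and a union bound, and let the geometric epoch schedule make the per-epoch errors decay like $\br{1-\frac{m}{M}}^e$ so that the total sample count is dominated by the last epoch. The unrolling and sample-counting at the end are fine and even more explicit than the paper, which delegates that part to manipulations from the analysis of \amsgd in Narasimhan et al.

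The genuine gap is in the derivation of the recursion itself: your two key inequalities hold only in the benign regime and fail exactly in the situations the paper's proof is mostly about. First, $\max_\w V(\w,v_{e-1}) \geq m\br{\Pf^\ast - v_{e-1}}$ is obtained by evaluating at $\w^\ast$, but when the level estimate overshoots ($v_{e-1} > \Pf^\ast$, an \emph{unrealizable} level, which can happen since the estimation error $\delta$ can be positive) the quantity $\Pf^\ast - v_{e-1}$ is negative and one only gets $\max_\w V(\w,v_{e-1}) \geq M\br{\Pf^\ast - v_{e-1}}$, which is weaker. Second, $\Pf_{(\Pf_\quant,\Pf_\class)}(\w_e) - v_{e-1} = V(\w_e,v_{e-1})/\Pf_\quant(\w_e) \geq V(\w_e,v_{e-1})/M$ is valid only when $V(\w_e,v_{e-1}) \geq 0$; when the \nemsis error $\alpha_e$ (or an unrealizable $v_{e-1}$) makes $V(\w_e,v_{e-1})$ negative, dividing by $\Pf_\quant(\w_e) \in [m,M]$ flips the bound to $V(\w_e,v_{e-1})/m$. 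This is precisely the content of Lemma~\ref{lem:scanfc-robust} (parts 3, 5, 7) and the two-case analysis of Lemma~\ref{lem:scanfc-noisy-progress}, whose outcome is the recursion $\Delta_{t+1} \leq \br{1-\frac{m}{M}}\Delta_t + \frac{M}{m}\abs{\delta_t} + \frac{\epsilon_t}{m}$ rather than your $\Delta_e \leq \br{1-\frac{m}{M}}\Delta_{e-1} + \frac{\alpha_e + L\beta_e}{M}$. Since $m$ and $M$ are constants this only changes constant factors and your asymptotic conclusion survives, but as written your argument does not cover the overestimation/negative-valuation cases, so the recursion you unroll is not established for all epochs; adding the case analysis (or citing the robustness lemma) closes the gap.
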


\textit{Related work of Narasimhan \etal}: Narasimhan \etal \cite{NarasimhanKJ2015} also proposed two algorithms \am and \amsgd which seek to optimize \emph{pseudo-linear} performance measures. However, neither those algorithms nor their analyses transfer directly to the pseudo-concave setting. This is because, by exploiting the pseudo-linearity of the performance measure, \am and \amsgd are able to convert their problem to a sequence of cost-weighted optimization problems which are very simple to solve. This convenience is absent here and as mentioned above, even after creation of the valuation function, \scan still has to solve a possibly nested concave minimization problem which it does by invoking the \nemsis procedure on this inner problem. The proof technique used in \cite{NarasimhanKJ2015} for analyzing \am also makes heavy use of pseudo-linearity. The convergence proof of \scanfc, on the other hand, is more general and yet guarantees a linear convergence rate. 

\begin{table}
\centering
{
\caption{Statistics of data sets used.}
\begin{tabular}{|c|c|c|c|}
\hline
\textbf{Data Set}		&\textbf{Data Points}	& \textbf{Features}	& \textbf{Positives}\\\hline
KDDCup08	& 102,294		& 117		& 0.61\%\\\hline
PPI			& 240,249		& 85		& 1.19\%\\\hline
CoverType	& 581,012		& 54		& 1.63\%\\\hline
ChemInfo	& 2142			& 55		& 2.33\%\\\hline
Letter		& 20,000		& 16		& 3.92\%\\\hline
IJCNN-1		& 141,691		& 22		& 9.57\%\\\hline
Adult		& 48,842		& 123		& 23.93\%\\\hline
Cod-RNA		& 488,565		& 8			& 33.3\%\\\hline
\end{tabular}
}
\label{tab:dataset-stats}
\end{table}

\section{Experimental Results}
\label{sec:exps}

We carried out an extensive set of experiments on diverse set of benchmark and real-world data to compare our proposed methods with other state-of-the-art approaches.\\[-7pt]

\textbf{Data sets}: We used the following benchmark data sets from the UCI machine learning repository : a) IJCNN, b) Covertype, c) Adult, d) Letters, and e) Cod-RNA. We also used the following three real-world data sets: a) Cheminformatics, a drug discovery data set from \cite{JorissenG2005}, b) 2008 KDD Cup challenge data set on breast cancer detection, and c) a data set pertaining to a protein-protein interaction (PPI) prediction task \cite{ppiQBK06}.  In each case, we used 70\% of the data for training and the remaining for testing.\\[-7pt]

\textbf{Methods}: We compares our proposed \nemsis and \scan algorithms\footnote{We will make code for our methods available publicly.} against the state-of-the-art one-pass mini-batch stochastic gradient method (\textbf{1PMB}) of \cite{KarNJ14}  and the \svmperf technique of \cite{JoachimsFY09}. Both these techniques are capable of optimizing structural SVM surrogates of arbitrary performance measures and we modified their implementations to suitably adapt them to the performance measures considered here. The \nemsis and \scan implementations used the hinge-based concave surrogate.\\[-7pt]

\textbf{Non-surrogate NS Approaches}: We also experimented with a variant of the \nemsis and \scan algorithms, where the dual updates were computed using original count based TPR and TNR values, rather than surrogate reward functions. We refer to this version as \nemsisns. We also developed a similar version of \scan called \scanns where the level estimation was performed using 0-1 TPR/TNR values. We empirically observed these non-surrogate versions of the algorithms to offer superior and more stable performance than the surrogate versions.\\

\begin{figure}[H]
\centering
\subfigure[Adult]{
\includegraphics[width=0.35\textwidth]{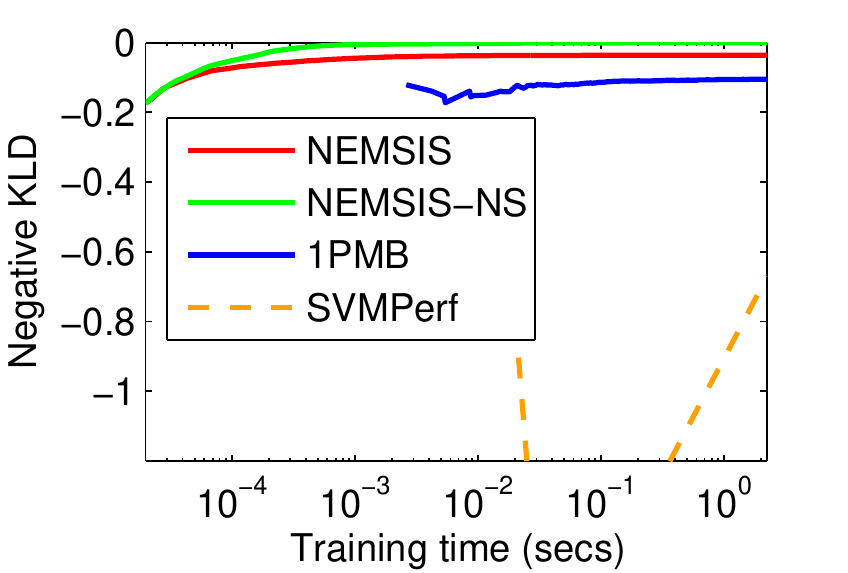}
\label{subfig:kld-a9a}
}
\subfigure[Cod-RNA]{
\includegraphics[width=0.35\textwidth]{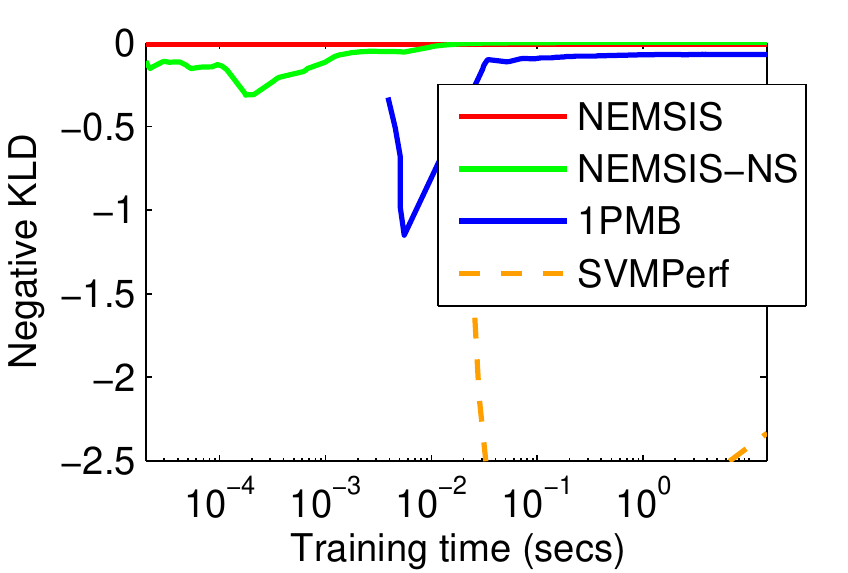}
\label{subfig:kld-cod-rna}
}\\
\subfigure[KDD08]{
\includegraphics[width=0.35\textwidth]{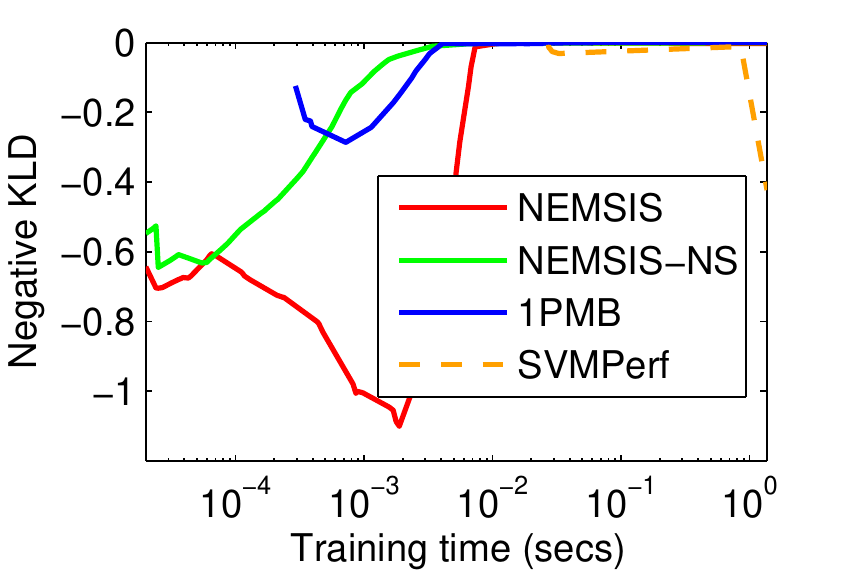}
\label{subfig:kld-kdd08}
}
\subfigure[PPI]{
\includegraphics[width=0.35\textwidth]{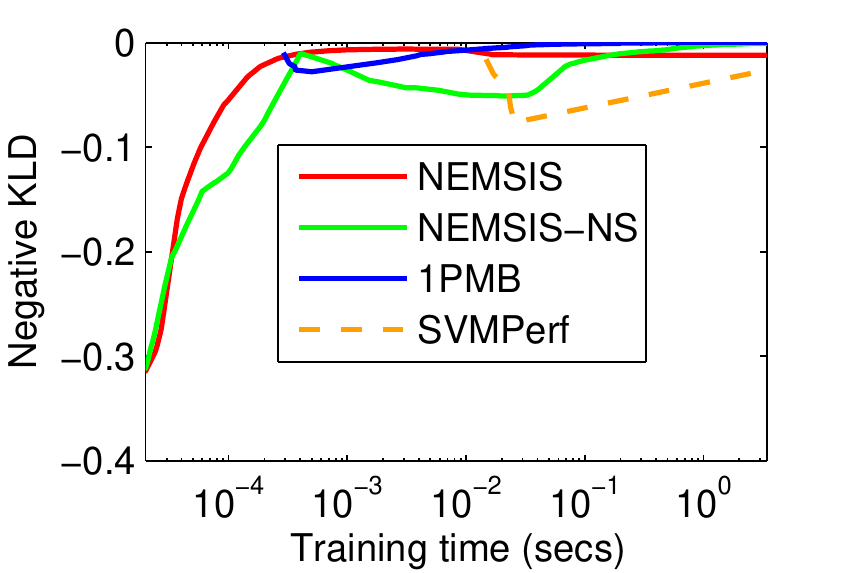}
\label{subfig:kld-ppi}
}
\caption{Experiments with \nemsis on Neg\kld: Plot of Neg\kld as a function of training time.}
\label{fig:KLD}
\end{figure}

\begin{figure}[H]
\centering
\subfigure[Adult]{
\includegraphics[width=0.35\textwidth]{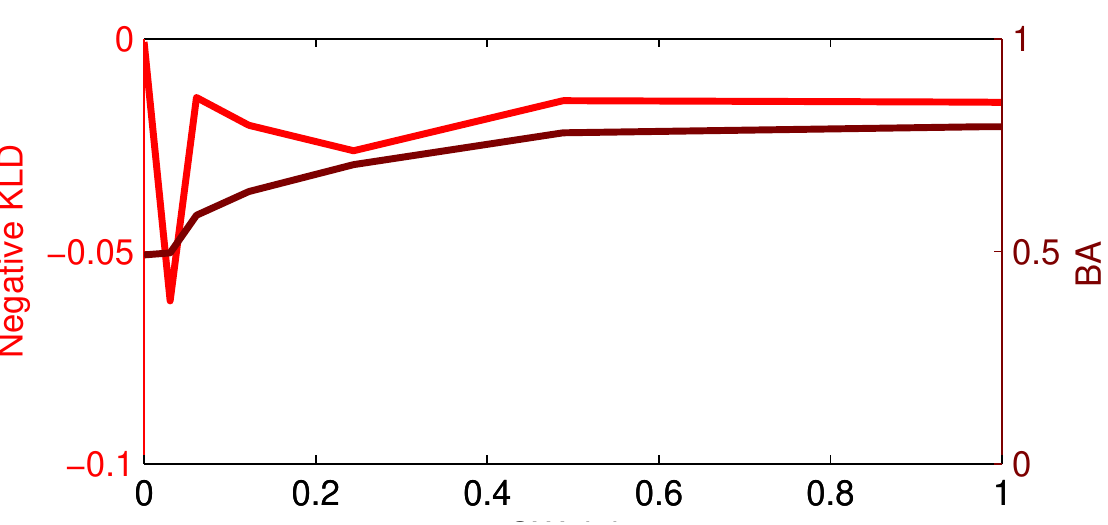}
\label{subfig:kldvsber-a9a}
}
\subfigure[Cod-RNA]{
\includegraphics[width=0.35\textwidth]{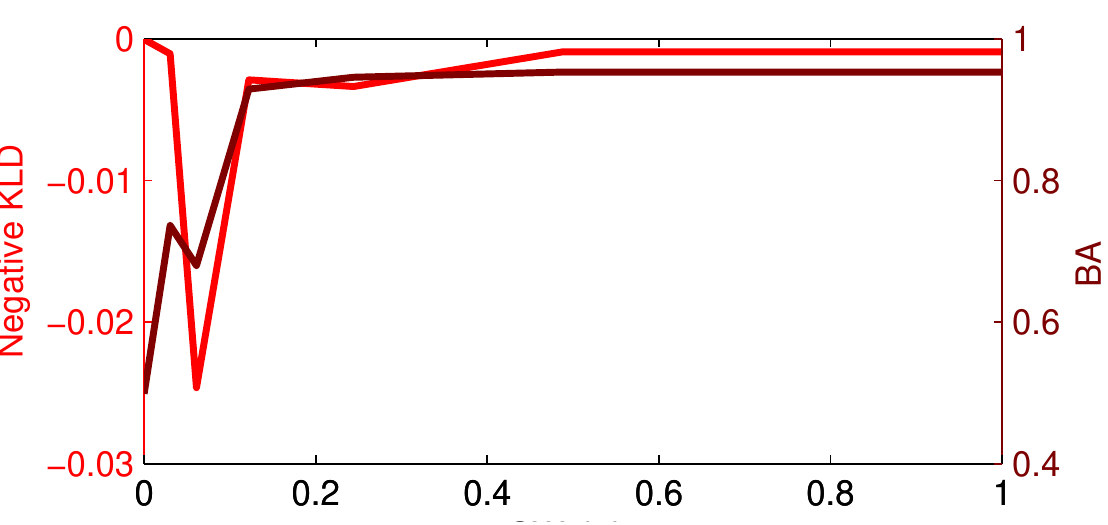}
\label{subfig:kldvsber-cod-rna}
}\\
\subfigure[Covtype]{
\includegraphics[width=0.35\textwidth]{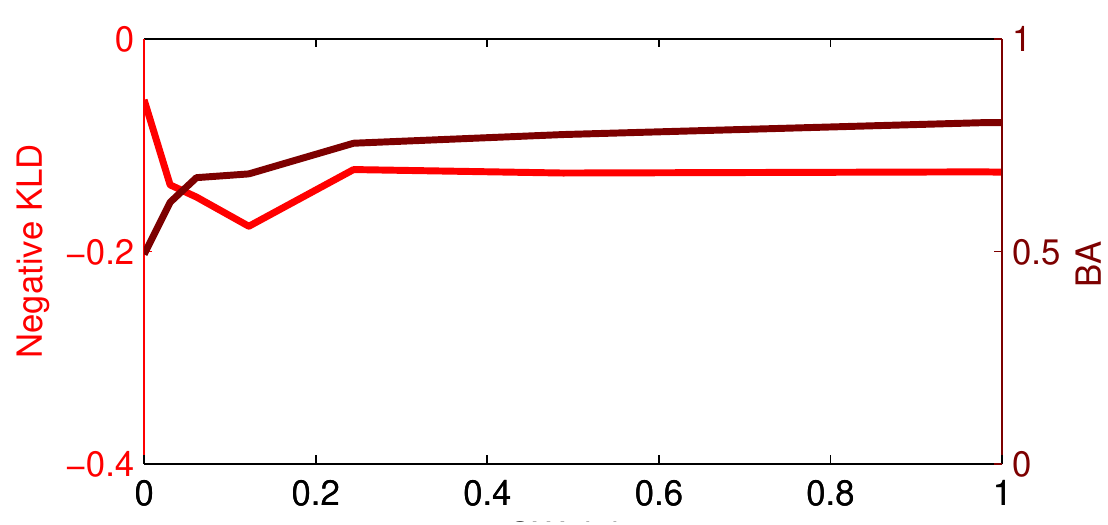}
\label{subfig:kldvsber-covtype}
}
\caption{Experiments on \nemsis with \bakld: Plots of quantification and classification performance as CWeight is varied.}
\label{fig:KLDvBER}
\end{figure}

\begin{figure}[H]
\centering
\includegraphics[width=0.25\textwidth]{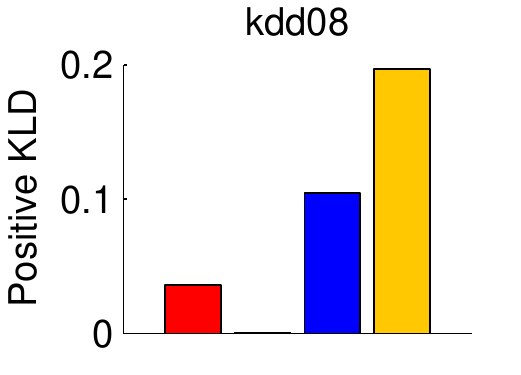}
\includegraphics[width=0.25\textwidth]{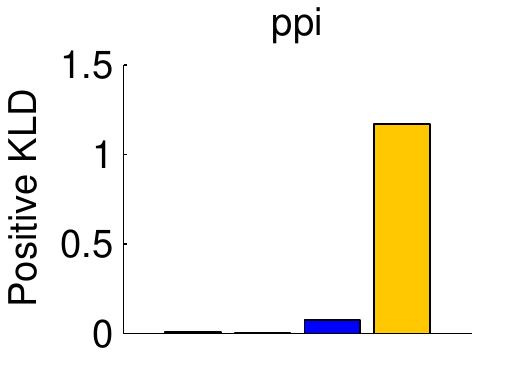}
\includegraphics[width=0.25\textwidth]{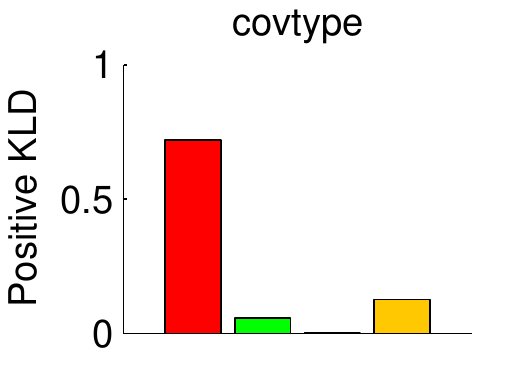}
\includegraphics[width=0.25\textwidth]{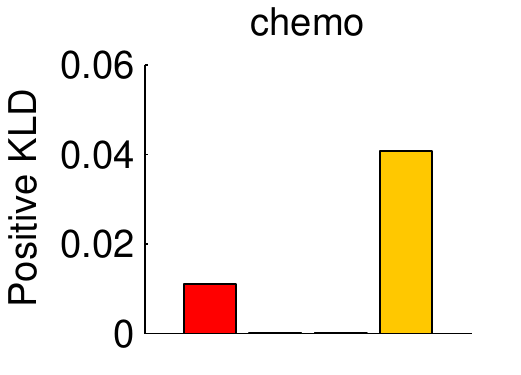}
\includegraphics[width=0.25\textwidth]{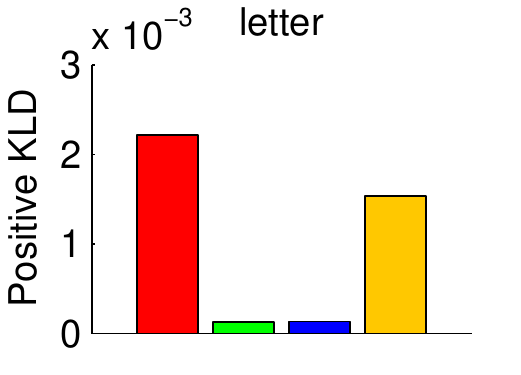}
\includegraphics[width=0.25\textwidth]{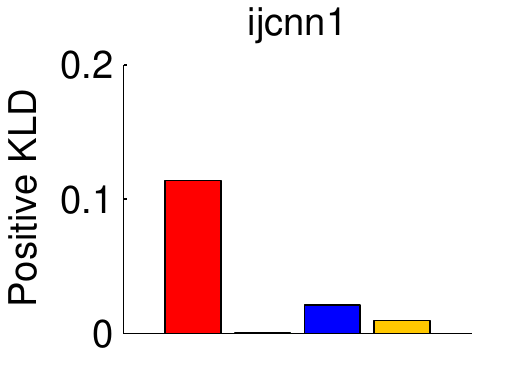}
\includegraphics[width=0.25\textwidth]{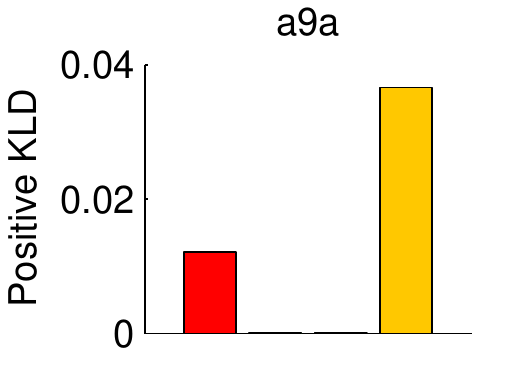}
\includegraphics[width=0.25\textwidth]{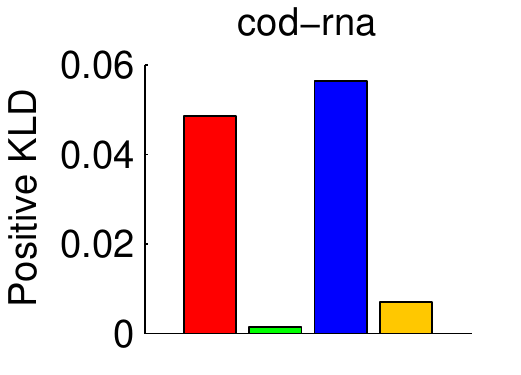}
\includegraphics[width=0.25\textwidth]{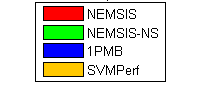}
\caption{A comparison of the \kld performance of various methods on data sets with varying class proportions (see Table~\ref{tab:dataset-stats}).}
\label{fig:prevalence}
\end{figure}

\begin{figure}[H]
\centering
\subfigure[Adult]{
\includegraphics[width=0.8\textwidth]{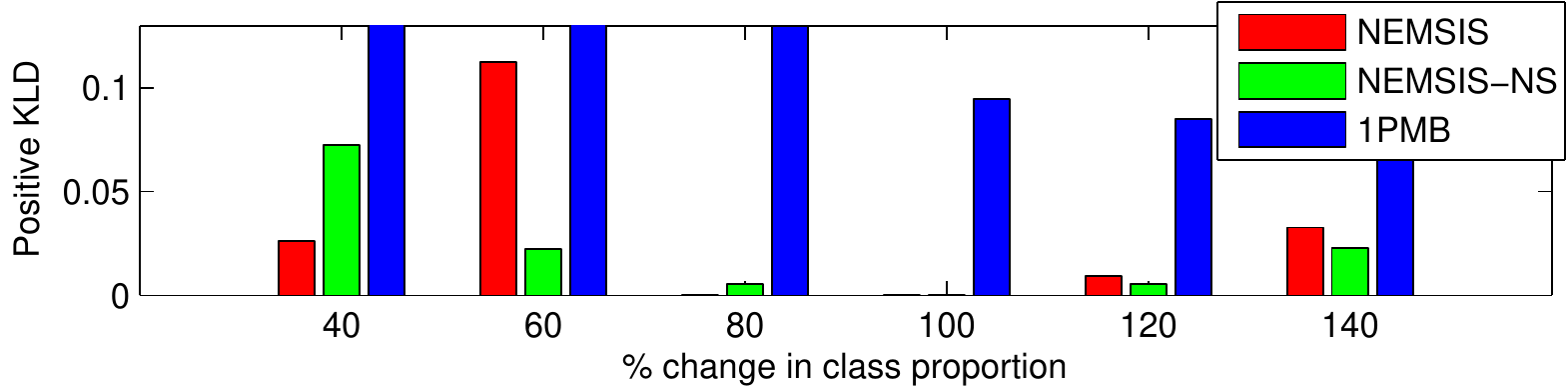}
\label{subfig:drift-a9a}
}
\subfigure[Letter]{
\includegraphics[width=0.8\textwidth]{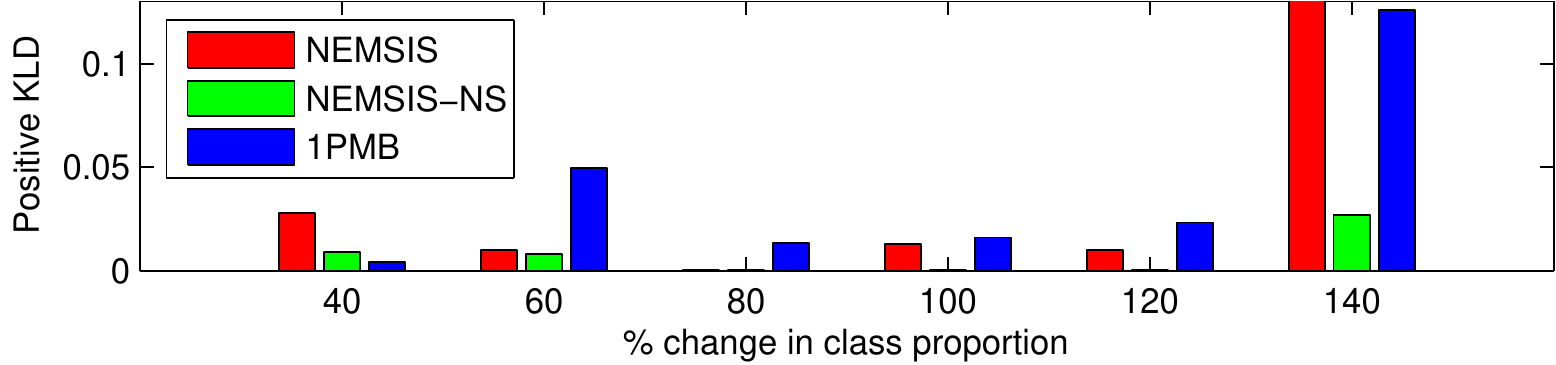}
\label{subfig:drift-letter}
}
\caption{A comparison of the \kld performance of various methods when distribution drift is introduced in the test sets.}
\label{fig:drift}
\end{figure}

\begin{figure}[H]
\centering
\subfigure[Adult]{
\includegraphics[width=0.35\textwidth]{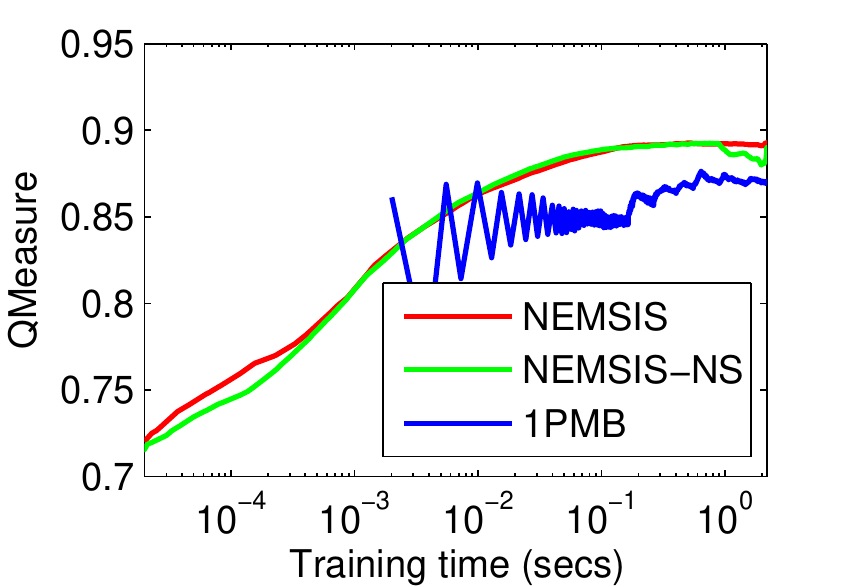}
\label{subfig:qmeasure-a9a}
}
\subfigure[Cod-RNA]{
\includegraphics[width=0.35\textwidth]{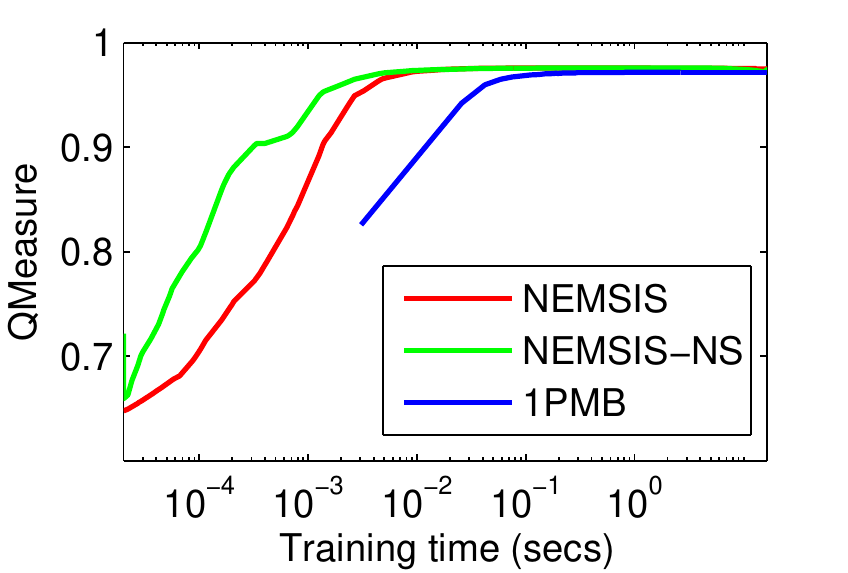}
\label{subfig:qmeasure-cod-rna}
}\\
\subfigure[IJCNN1]{
\includegraphics[width=0.35\textwidth]{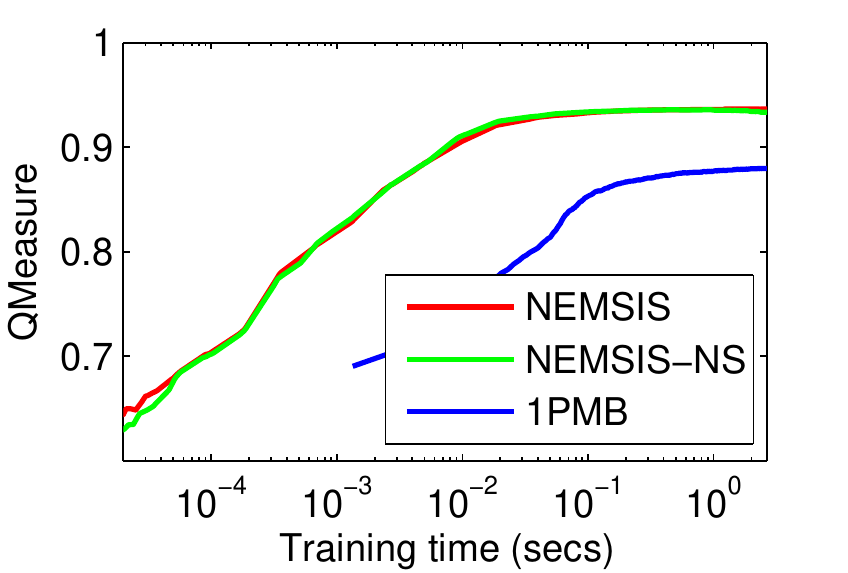}
\label{subfig:qmeasure-ijcnn1}
}
\subfigure[KDD08]{
\includegraphics[width=0.35\textwidth]{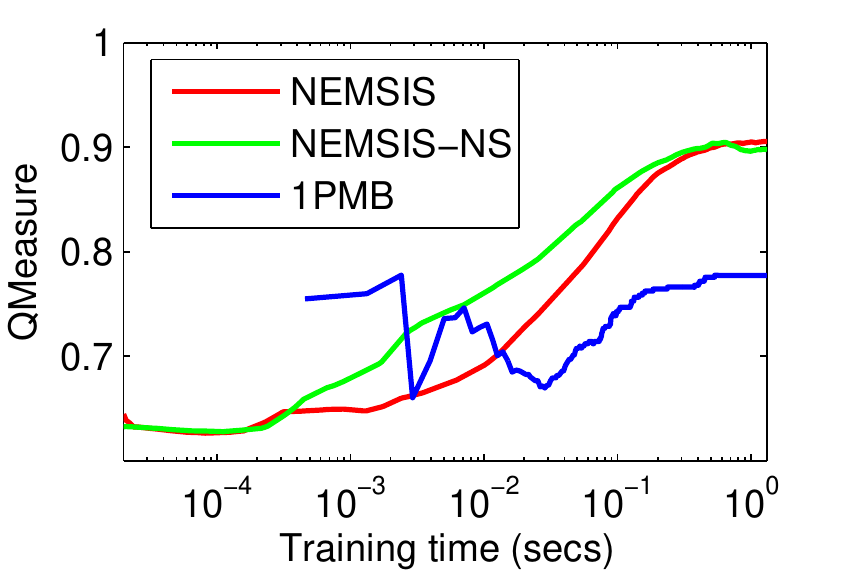}
\label{subfig:qmeasure-kdd08}
}
\caption{Experiments with \nemsis on Q-measure: Plot of Q-measure performance as a function of time.}
\label{fig:qmeasure}
\end{figure}

\begin{figure}[H]
\centering
\subfigure[Adult]{
\includegraphics[width=0.35\textwidth]{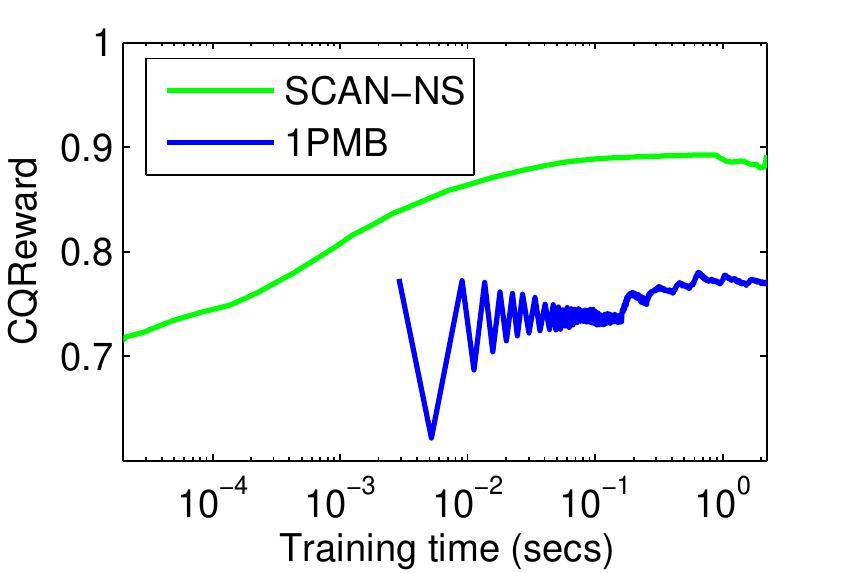}
\label{subfig:cqreward-a9a}
}
\subfigure[Cod-RNA]{
\includegraphics[width=0.35\textwidth]{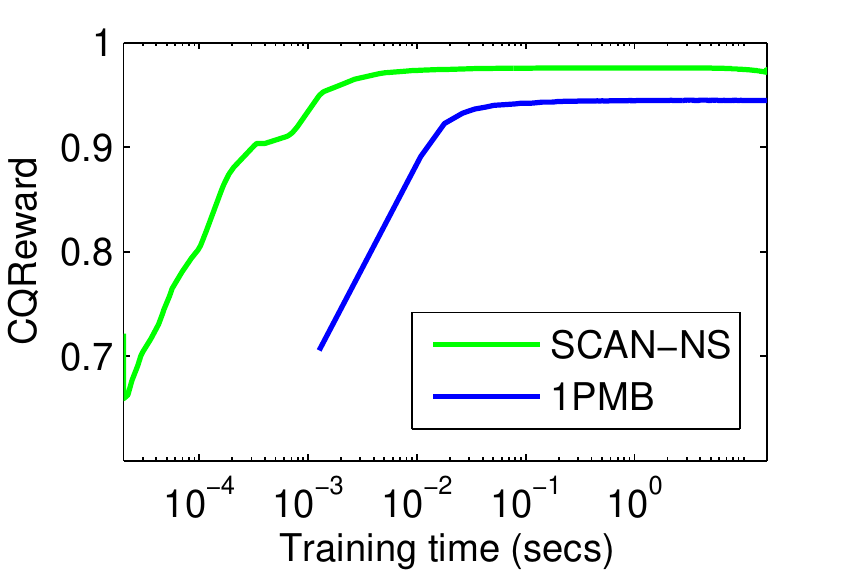}
\label{subfig:cqreward-cod-rna}
}\\
\subfigure[CovType]{
\includegraphics[width=0.35\textwidth]{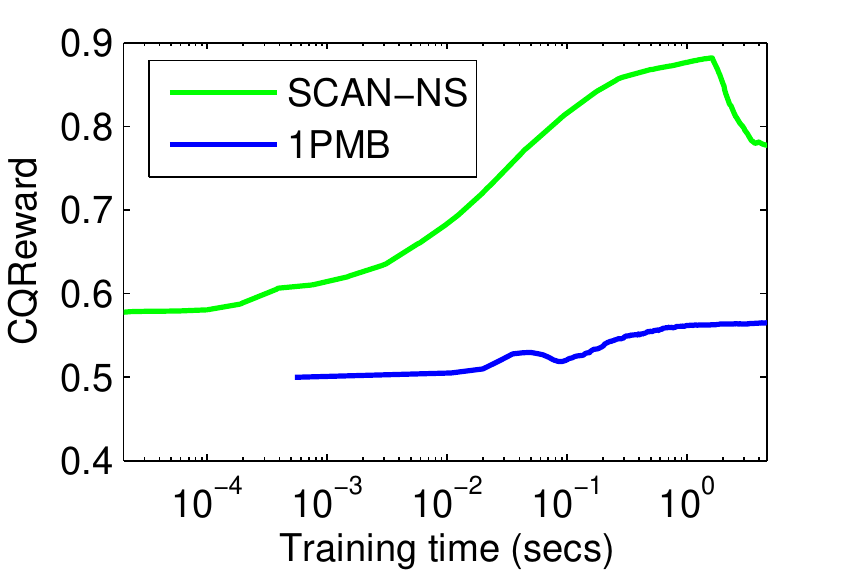}
\label{subfig:cqreward-covtype}
}
\subfigure[IJCNN1]{
\includegraphics[width=0.35\textwidth]{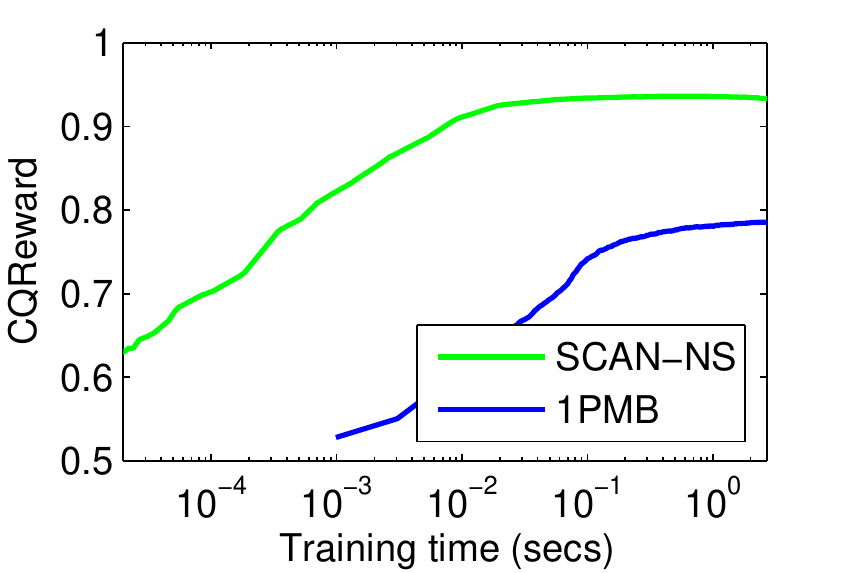}
\label{subfig:cqreward-ijcnn1}
}
\caption{Experiments with \scan on CQreward: Plot of CQreward performance as a function of time.}
\label{fig:cqreward}
\end{figure}

\textbf{Parameters}: All parameters including step sizes, upper bounds on reward functions, regularization parameters, and projection radii were tuned from the values $\{10^{-4},10^{-3},\ldots,10^3,10^4\}$ using a held-out portion of the training set treated as a validation set. For step sizes, the base step length $\eta_0$ was tuned from the above set and the step lengths were set to $\eta_t = \eta_0/\sqrt t$. In \textbf{1PMB}, we mimic the parameter setting in \cite{KarNJ14}, setting the buffer size to 500 and the number of passes to 25.\\

\textbf{Comparison on Neg\kld:} We first compare \nemsisns and \nemsis against the baselines \textbf{1PMB}  and \svmperf on several data sets on the negative KLD measure. The results are presented in Figure \ref{fig:KLD}. It is clear that the proposed algorithms have comparable performance with significantly
faster rate of convergence. Since \svmperf is a batch/off-line method, it is important to clarify how it was compared against the other online methods. In this case,
timers were embedded inside the \svmperf code, and at regular intervals, the performance of the current  model vector was evaluated. It is clear that \svmperf is significantly slower and its behavior is quite erratic. The proposed methods are often faster than \textbf{1PMB}. On three of the four data sets \nemsisns achieves a faster rate of convergence compared to \nemsis.\\

\textbf{Comparison on \bakld:}
We also used the \bakld performance measure to evaluate the trade-off \nemsis offers between quantification and classification performance. The weighting parameter $C$ in \bakld (see Table~\ref{tab:nestedexamples}), denoted here by CWeight to avoid confusion, was varied from 0 to 1 across a fine grid; for each value, \nemsis was used to optimize \bakld and its performance on \ba and \kld were noted separately. In the results presented in Figure \ref{fig:KLDvBER} for three data sets, notice that there is a sweet spot where the two tasks, i.e. quantification and classification simultaneously have good performance.\\

\textbf{Comparison under varying class proportions:}
We next evaluated the robustness of the algorithms across data sets with varying different class proportions (see Table~\ref{tab:dataset-stats} for the dataset label proportions). In Figure \ref{fig:prevalence}, we plot positive KLD (smaller values are better) for the proposed and baseline methods for these diverse datasets. Again, it is clear that the \nemsis family of algorithms of has better KLD performance compared to the baselines,
demonstrating their versatility across a range of class distributions.\\

\textbf{Comparison under varying drift:}
Next, we test the performance of the \nemsis family of methods when there are drifts in class proportions between the train and test sets. In each case, we retain the original class proportion in the train set, and vary the class proportions in the test set, by suitably sampling from the original set of positive and negative test instances.\footnote{More formally, we consider a setting where both the train and test sets are generated using the same conditional class distribution $\P(Y = 1\,|X)$, but with different marginal distributions over instances $\P(X)$, and thus, have different class proportions. Further, in these experiments, we made a simplistic assumption that there is no label noise; hence for any instance $\x$, $\P(Y=1\,|\,X=\x) = 1$ or $0$. Thus, we generated our test set with class proportion $p'$ by simply setting $\P(X = \x)$ to the following distribution: with probability $p'$, sample a point uniformly from all points with $\P(Y=1\,|\,X=\x) = 1$, and with probability $1-p'$, sample a point uniformly from all points with $\P(Y=1\,|\,X=\x) = 0$. 
}
 We have not included \svmperf in these experiments as it took an inordinately long time to complete. As seen in Figure \ref{fig:drift}, on the Adult and Letter data set the \nemsis family is fairly robust to small class drifts. As expected, when the class proportions change by a large amount  in the test set (over 100 percent), all algorithms perform poorly.\\

\textbf{Comparison on hybrid performance measures:}
Finally, we tested our methods in optimizing composite performance measures that strike a more nuanced trade-off between quantification and classification performance. Figures \ref{fig:qmeasure} contains results for the \nemsis methods while optimizing Q-measure (see Table \ref{tab:nestedexamples}), and Figure \ref{fig:cqreward} contains results for \scanns while optimizing CQReward (see Table \ref{tab:pscexamples}). The proposed methods are often significantly better than the baseline \textbf{1PMB} in terms of both accuracy and running time.

\section{Conclusion}

\noindent Quantification, the task of estimating class prevalence in problem settings subject to distribution drift, has emerged as an important problem in machine learning and data mining. Our discussion justified the necessity to design algorithms that exclusively solve the quantification task, with a special emphasis on performance measures such as the Kullback-Leibler divergence that is considered a de facto standard in the literature.

In this paper we proposed a family of algorithms \nemsis, \scanfc, \scan, and their non-surrogate versions, to address the online quantification problem. By abstracting Neg\kld and other hybrid performance measures as nested concave or pseudo concave functions we designed provably correct and efficient algorithms for optimizing these performance measures in an online stochastic setting.

We validated our algorithms on several data sets under varying conditions, including class imbalance and distribution drift. The proposed algorithms demonstrate the ability to jointly optimize both quantification and classification tasks. To the best of our knowledge this is the first work which directly addresses the online quantification problem and as such, opens up novel application areas.

\section*{Acknowledgments}
The authors thank the anonymous reviewers for their helpful comments. PK thanks the Deep Singh and Daljeet Kaur Faculty Fellowship, and the Research-I Foundation at IIT Kanpur for support. SL acknowledges the support from 7Pixel S.r.l., SyChip Inc., and Murata Japan.

\bibliographystyle{plain}
\bibliography{Fabrizio,refs-icml15-tale-of-two-classes,newrefs}

\appendix

\section{Deriving Updates for \large{\nemsis}}
\label{app:closed-form}

The derivation of the closed form updates for steps 15-17 in the \nemsis algorithm (see Algorithm~\ref{algo:nemsis}) starts with the observation that in all the nested concave performance measures considered here, the outer and the inner concave functions, namely $\Psi, \zeta_1, \zeta_2$ are concave, continuous, and differentiable. The logarithm function is non-differentiable at 0 but the smoothing step (see Section~ref{formulation}) ensures that we will never approach 0 in our analyses or the execution of the algorithm. The derivations hinge on the following basic result from convex analysis \cite{Zalinescu2002}:
\begin{lem}
\label{lem:mirror}
Let $f$ be a closed, differentiable and concave function and $f^\ast$ be its concave Fenchel dual. Then $\nabla f^\ast = (\nabla f)^{-1}$ i.e. for any $\x \in \X$ and $\vec u \in \X^\ast$ (the space of all linear functionals over $\X$), we have $\nabla f^\ast(\vec u) = \x$ iff $\nabla f(\x) = \vec u$.
\end{lem}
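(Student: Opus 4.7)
The plan is to prove this classical gradient-inversion identity in two passes --- one for each direction of the iff --- using the concave Fenchel--Young inequality and biconjugation. The workhorse is the inequality $f(\x) + f^\ast(\vec u) \leq \ip{\vec u}{\x}$, valid for all $\x$ and $\vec u$ and immediate from the definition $f^\ast(\vec u) = \inf_{\x'}\{\ip{\vec u}{\x'} - f(\x')\}$, with equality holding exactly when $\x$ attains the infimum defining $f^\ast(\vec u)$.

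For the forward direction (if $\nabla f(\x) = \vec u$ then $\nabla f^\ast(\vec u) = \x$), I would first observe that since $f$ is concave, the map $\x' \mapsto \ip{\vec u}{\x'} - f(\x')$ is convex, so the first-order stationarity condition $\nabla f(\x) = \vec u$ is both necessary and sufficient for this map to be globally minimized at $\x$. That gives equality in Fenchel--Young at $(\x,\vec u)$. Applying the inequality at a different covector $\vec u'$ while keeping $\x$ fixed then yields
\[
f^\ast(\vec u') - f^\ast(\vec u) \;\leq\; \ip{\vec u' - \vec u}{\x},
\]
which identifies $\x$ as a supergradient of the concave function $f^\ast$ at $\vec u$. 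Since we are working under the hypothesis that $\nabla f^\ast(\vec u)$ exists, the superdifferential of $f^\ast$ at $\vec u$ collapses to the singleton $\{\nabla f^\ast(\vec u)\}$, forcing $\nabla f^\ast(\vec u) = \x$.

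For the reverse direction, I would appeal to biconjugation: because $f$ is closed and concave we have $f^{\ast\ast} = f$, and $f^\ast$ is itself closed and concave (a Fenchel conjugate always is). Applying the forward direction to $f^\ast$ at the point $(\vec u,\x)$ then shows that $\nabla f^\ast(\vec u) = \x$ implies $\nabla f^{\ast\ast}(\x) = \vec u$, i.e.\ $\nabla f(\x) = \vec u$. This closes the equivalence in a fully symmetric way, and avoids any envelope-theorem computation involving the dependence of the argmin on $\vec u$.

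The only delicate step is the upgrade from supergradient to gradient, which is the standard fact that a concave function differentiable at a point has its entire superdifferential at that point equal to the singleton containing the gradient; this is where the differentiability hypotheses in the lemma are consumed. No strong concavity or smoothness of $f$ is needed beyond what is already assumed: differentiability of $f$ is used for the first-order condition in the forward direction, closedness of $f$ is used to justify $f^{\ast\ast} = f$ in the reverse direction, and differentiability of $f^\ast$ at $\vec u$ is implicit whenever we speak of $\nabla f^\ast(\vec u)$.
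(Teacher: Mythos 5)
Your proposal is correct, but note that the paper does not actually prove this lemma: it is imported as a ``basic result from convex analysis'' with a citation to Zalinescu (2002) and then used to read off the closed-form dual updates. Your argument is the standard self-contained derivation and it works: first-order optimality of $\x$ in the (convex) infimand defining $f^\ast(\vec u)$ gives equality in the concave Fenchel--Young inequality, evaluating the inequality at other covectors exhibits $\x$ as a supergradient of $f^\ast$ at $\vec u$, differentiability of $f^\ast$ collapses the superdifferential to the singleton $\{\nabla f^\ast(\vec u)\}$, and the converse follows by symmetry through biconjugation $f^{\ast\ast}=f$, which is exactly where closedness is consumed. The one caveat you correctly flag is real: differentiability of $f$ does not by itself guarantee differentiability of $f^\ast$, so the clean unconditional statement is the superdifferential inversion $\vec u \in \partial f(\x) \iff \x \in \partial f^\ast(\vec u)$, with the lemma as stated implicitly assuming $\nabla f^\ast$ exists where it is invoked. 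It is worth observing that for the paper's actual use (deriving steps 15--17 of \nemsis) the superdifferential version already suffices: the argmin condition gives $\qew_t \in \partial \Psi^\ast(\vgamma_t)$, inversion gives $\vgamma_t \in \partial \Psi(\qew_t)$, and differentiability of $\Psi$ (not of $\Psi^\ast$) then pins down $\vgamma_t = \nabla\Psi(\qew_t)$; phrasing your proof at the superdifferential level would therefore remove the only delicate hypothesis. In short: correct, essentially the textbook proof the paper points to rather than reproduces, and slightly strengthenable by avoiding the differentiability assumption on $f^\ast$.
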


Using this result, we show how to derive the updates for $\vgamma$. The updates for $\vbeta$ and $\valpha$ follow similarly. We have
\[
\vgamma_t = \underset{\vgamma}{\arg\min}\bc{\vgamma\cdot\qew_t - \Psi^\ast(\vgamma)}
\]
By first order optimality conditions, we get that $\vgamma_t$ can minimize the function $h(\gamma) = \vgamma\cdot\qew_t - \Psi^\ast(\vgamma)$ only if $\qew_t = \nabla\Psi^\ast(\vgamma_t)$. Using Lemma~\ref{lem:mirror}, we get $\vgamma_t = \nabla\Psi(\qew_t)$. Using this technique, all the closed form expressions can be readily derived.

For the derivations of $\alpha, \beta$ for Neg\kld, and the derivation of $\beta$ for Q-measure, the derivations follow when we work with definitions of these performance measures with the TP and TN \emph{counts} or cumulative surrogate reward values, rather than the TPR and TNR values and the average surrogate rewards.

\newcommand{\bl}{\text{\boldmath${\mathbf \ell}$}}
\newcommand{\pew}{\mathbf{p}}
\section{Proof of Theorem~\ref{THM:NEMSIS-ANALYSIS}}
\label{app:thm:nemsis-proof}

We begin by observing the following general lemma regarding the follow the leader algorithm for strongly convex losses. This will be useful since steps 15-17 of Algorithm~\ref{algo:nemsis} are essentially executing follow the leader steps to decide the best value for the dual variables.
\begin{lem}
\label{lem:ftl-reg}
Suppose we have an action space $\X$ and execute the follow the leader algorithm on a sequence of loss functions $\ell_t: \X \rightarrow \R$, each of which is $\alpha$-strongly convex and $L$-Lipschitz, then we have
\[
\sum_{t=1}^T\ell_t(\x_t) - \inf_{\x\in\X}\sum_{t=1}^T\ell_t(\x) \leq \frac{L^2\log T}{\alpha},
\]
where $\x_{t+1} = \underset{\x \in \X}{\arg\min}\sum_{\tau=1}^t\ell_\tau(\x)$ are the FTL plays.
\end{lem}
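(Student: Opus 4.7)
The plan is to follow the classical two-step analysis of follow-the-leader on strongly convex losses: first reduce the regret to one-step stability via a \emph{be-the-leader} identity, and then control that stability using the strong convexity and Lipschitzness of the per-round losses.

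\textbf{Step 1 (Be-The-Leader reduction).} I would first establish by induction on $T$ that
\[
\sum_{t=1}^T \ell_t(\x_{t+1}) \;\leq\; \inf_{\x \in \X}\sum_{t=1}^T \ell_t(\x).
\]
The base case $T=1$ is immediate from $\x_2 = \arg\min_\x \ell_1(\x)$. For the inductive step, assuming the claim at $T-1$, instantiate the IH at the particular point $\x = \x_{T+1}$ and add $\ell_T(\x_{T+1})$ to both sides; the right-hand side then becomes $\sum_{t=1}^T \ell_t(\x_{T+1}) \leq \inf_\x \sum_{t=1}^T \ell_t(\x)$, where the last inequality holds because $\x_{T+1}$ is by definition the minimizer of this sum. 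Subtracting from $\sum_t \ell_t(\x_t)$ yields
\[
\sum_{t=1}^T \ell_t(\x_t) - \inf_{\x\in\X}\sum_{t=1}^T \ell_t(\x) \;\leq\; \sum_{t=1}^T \bigl[\ell_t(\x_t) - \ell_t(\x_{t+1})\bigr],
\]
so the task reduces to bounding these one-step stability terms.

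\textbf{Step 2 (stability via strong convexity).} Writing $L_t := \sum_{\tau \leq t}\ell_\tau$, observe that $L_t$ is $t\alpha$-strongly convex with minimizer $\x_{t+1}$, and $L_{t-1}$ is $(t-1)\alpha$-strongly convex with minimizer $\x_t$. The first-order strong-convexity inequalities at these minimizers read $L_t(\x_t) \geq L_t(\x_{t+1}) + \tfrac{t\alpha}{2}\|\x_t - \x_{t+1}\|^2$ and $L_{t-1}(\x_{t+1}) \geq L_{t-1}(\x_t) + \tfrac{(t-1)\alpha}{2}\|\x_t - \x_{t+1}\|^2$. Adding these and using $L_t - L_{t-1} = \ell_t$ produces the lower bound
\[
\ell_t(\x_t) - \ell_t(\x_{t+1}) \;\geq\; \tfrac{(2t-1)\alpha}{2}\|\x_t - \x_{t+1}\|^2.
\]
On the other hand, $L$-Lipschitzness of $\ell_t$ provides the matching upper bound $\ell_t(\x_t) - \ell_t(\x_{t+1}) \leq L\|\x_t - \x_{t+1}\|$. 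Comparing the two inequalities forces $\|\x_t - \x_{t+1}\| \leq 2L/((2t-1)\alpha)$, and hence $\ell_t(\x_t) - \ell_t(\x_{t+1}) \leq 2L^2/((2t-1)\alpha)$.

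\textbf{Step 3 (harmonic sum).} Substituting the per-round bound into the Step-1 reduction and using $\sum_{t=1}^T 1/(2t-1) \leq 1 + \tfrac{1}{2}\log T$ gives $\sum_{t=1}^T[\ell_t(\x_t) - \ell_t(\x_{t+1})] = O(L^2 \log T / \alpha)$, matching the claim up to absolute constants.

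The main obstacle is purely bookkeeping: at $t=1$ the cumulative loss $L_0 \equiv 0$ carries no strong convexity, so the combined inequality in Step~2 degenerates there. This is easily patched, either by noting that the denominator $2t-1$ equals $1$ at $t=1$ so no singularity appears in the final summation, or by handling the first term separately using $\ell_1(\x_1) - \ell_1(\x_2) \leq L\|\x_1 - \x_2\|$ together with the diameter of $\X$. Neither requires any new idea; the remainder is arithmetic to land on the stated constant.
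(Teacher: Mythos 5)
Your proposal is correct and follows essentially the same route as the paper's proof: the be-the-leader (forward regret) reduction, the paired strong-convexity inequalities for the cumulative losses at their minimizers yielding $\ell_t(\x_t)-\ell_t(\x_{t+1}) \geq \frac{\alpha(2t-1)}{2}\norm{\x_t-\x_{t+1}}_2^2$, the Lipschitz upper bound forcing $\norm{\x_t-\x_{t+1}}_2 \leq \frac{2L}{\alpha(2t-1)}$, and the harmonic sum. You are merely more explicit where the paper appeals to the ``standard forward regret analysis,'' and your observation that the stated constant is only met up to universal constants is consistent with the paper's own bookkeeping.
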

\begin{proof}
By the standard forward regret analysis, we get
\[
\sum_{t=1}^T\ell_t(\x_t) - \inf_{\x\in\X}\sum_{t=1}^T\ell_t(\x) \leq \sum_{t=1}^T\ell_t(\x_t) - \sum_{t=1}^T\ell_t(\x_{t+1})
\]
Now, by using the strong convexity of the loss functions, and the fact that the strong convexity property is additive, we get
\begin{align*}
\sum_{\tau=1}^{t-1} \ell_\tau(\x_{t+1}) &\geq \sum_{\tau=1}^{t-1} \ell_\tau(\x_t) + \frac{\alpha(t-1)}{2}\norm{\x_t - \x_{t+1}}_2^2\\
\sum_{\tau=1}^{t} \ell_\tau(\x_t) &\geq \sum_{\tau=1}^{t} \ell_\tau(\x_{t+1}) + \frac{\alpha t}{2}\norm{\x_t - \x_{t+1}}_2^2,
\end{align*}
which gives us
\[
\ell_t(\x_t) - \ell_t(\x_{t+1}) \geq \frac{\alpha}{2}(2t -1)\cdot\norm{\x_t - \x_{t+1}}_2^2.
\]
However, we get $\ell_t(\x_t) - \ell_t(\x_{t+1}) \leq L\cdot\norm{\x_t - \x_{t+1}}_2$ by invoking Lipschitz-ness of the loss functions. This gives us
\[
\norm{\x_t - \x_{t+1}}_2 \leq \frac{2L}{\alpha(2t-1)}.
\]
This, upon applying Lipschitz-ness again, gives us
\[
\ell_t(\x_t) - \ell_t(\x_{t+1}) \leq \frac{2L^2}{\alpha(2t-1)}.
\]
Summing over all the time steps gives us the desired result.
\end{proof}

For the rest of the proof, we shall use the shorthand notation that we used in Algorithm~\ref{algo:nemsis}, i.e.
\begin{align*}
\valpha_t &= (\alpha_{t,1},\alpha_{t,2}),\text{\hspace*{5ex}}&\text{(the dual variables for $\zeta_1$)}\\
\vbeta_t &= (\beta_{t,1},\beta_{t,2}),&\text{(the dual variables for $\zeta_2$)}\\
\vgamma_t &= (\gamma_{t,1},\gamma_{t,2}),&\text{(the dual variables for $\Psi$)}
\end{align*}
We will also use additional notation
\begin{align*}
\vecs_t &= (r^+(\w_t;\x_t,y_t),r^-(\w_t;\x_t,y_t)),\\
\pew_t &= \br{\valpha_t^\top\vecs_t - \zeta_1^\ast(\valpha_t), \vbeta_t^\top\vecs_t - \zeta_2^\ast(\vbeta_t)},\\
\bl_t(\w) &= (r^+(\w;\x_t,y_t),r^-(\w;\x_t,y_t))\\
R(\w) &= (P(\w),N(\w))
\end{align*}

Note that $\bl_t(\w_t) = \vecs_t$. We now define a quantity that we shall be analyzing to obtain the convergence bound
\[
(A) = \sum_{t=1}^T(\vgamma_t^\top\pew_t - \Psi^\ast(\vgamma_t))
\]

Now, since $\Psi$ is $\beta_\Psi$-smooth and concave, by Theorem~\ref{thm:scss}, we know that $\Psi^\ast$ is $\frac{1}{\beta}$-strongly concave. However that means that the loss function $g_t(\vgamma) := \vgamma^\top\pew_t - \Psi^\ast(\vgamma)$ is $\frac{1}{\beta}$-strongly convex. Now Algorithm~\ref{algo:nemsis} (step 17) implements
\[
\vgamma_t = \underset{\vgamma}{\arg\min}\bc{\vgamma\cdot\qew_t - \Psi^\ast(\vgamma)},
\]
where $\qew_t = \frac{1}{t}\sum_{\tau=1}^t\pew_\tau$ (see steps 7, 10, 13 that update $\qew_t$). Notice that this is identical to the FTL algorithm with the losses $g_t(\vgamma) = \pew_t\cdot\vgamma - \Psi^\ast(\vgamma)$ which are strongly convex, and can be shown to be $(B_r(L_{\zeta_1}+L_{\zeta_2}))$-Lipschitz, neglecting universal constants. Thus, by an application of Lemma~\ref{lem:ftl-reg}, we get, upto universal constants
\[
(A) \leq \inf_{\vgamma}\bc{\sum_{t=1}^T(\vgamma^\top\pew_t - \Psi^\ast(\vgamma))} + \beta_\Psi (B_r(L_{\zeta_1}+L_{\zeta_2}))^2\log T
\]
The same technique, along with the observation that steps 15 and 16 of Algorithm~\ref{algo:nemsis} also implement the FTL algorithm, can be used to get the following results upto universal constants
\[
\sum_{t=1}^T(\valpha_t^\top\vecs_t - \zeta_1^\ast(\valpha_t)) \leq \inf_{\valpha}\bc{\sum_{t=1}^T(\valpha^\top\vecs_t - \zeta_1^\ast(\valpha))} + \beta_{\zeta_1} (B_rL_{\zeta_1})^2\log T,
\]
and
\[
\sum_{t=1}^T(\vbeta_t^\top\vecs_t - \zeta_2^\ast(\vbeta_t)) \leq \inf_{\vbeta}\bc{\sum_{t=1}^T(\vbeta^\top\vecs_t - \zeta_2^\ast(\vbeta))} + \beta_{\zeta_2} (B_rL_{\zeta_2})^2\log T.
\]
This gives us, for
\[
\Delta_1 = \beta_\Psi (B_r(L_{\zeta_1}+L_{\zeta_2}))^2 + \beta_{\zeta_1} (B_rL_{\zeta_1})^2 + \beta_{\zeta_2} (B_rL_{\zeta_2})^2,
\]
\begin{align*}
(A) \leq{}& \inf_{\vgamma}\bc{{{\sum_{t=1}^T(\vgamma^\top\pew_t} - \Psi^\ast(\vgamma))}} + \Delta_1\log T\\
		={}& \inf_{\vgamma}\bc{\gamma_1\sum_{t=1}^T(\valpha_t^\top\vecs_t - \zeta_1^\ast(\valpha_t)) + \gamma_2\sum_{t=1}^T(\vbeta_t^\top\vecs_t - \zeta_2^\ast(\vbeta_t)) - \Psi^\ast(\vgamma)} + \Delta_1\log T\\
		\leq{}& \inf_{\vgamma,\valpha,\vbeta}\bc{\gamma_1\sum_{t=1}^T(\valpha^\top\vecs_t - \zeta_1^\ast(\valpha)) + \gamma_2\sum_{t=1}^T(\vbeta^\top\vecs_t - \zeta_2^\ast(\vbeta)) - \Psi^\ast(\vgamma)} + \Delta_1\log T
\end{align*}\normalsize
Now, because of the stochastic nature of the samples, we have
\[
\E{\vecs_t|\bc{(\x_\tau,y_\tau)}_{\tau=1}^{t-1}} = \E{\bl_{t}(\w_t)|\bc{(\x_\tau,y_\tau)}_{\tau=1}^{t-1}} = R(\w_t)
\]

\begin{align*}
\frac{(A)}{T} \leq{}& \inf_{\vgamma}\left\{\gamma_1\inf_{\valpha}\bc{\valpha^\top\br{\frac{1}{T}\sum_{t=1}^TR(\w_t)} - \zeta_1^\ast(\valpha)} \right.\\
&\hspace*{7ex}+ \left\{\gamma_2\inf_{\vbeta}\bc{\vbeta^\top\br{\frac{1}{T}\sum_{t=1}^TR(\w_t)} - \zeta_2^\ast(\vbeta)} - \Psi^\ast(\vgamma)\right\}\\
&{}+ {\frac{\Delta_1}{T}\br{\log T + \log\frac{1}{\delta}}}\\
={}& \inf_{\vgamma}\br{\gamma_1\zeta_1\br{\frac{1}{T}\sum_{t=1}^TR(\w_t)} + \gamma_2\zeta_2\br{\frac{1}{T}\sum_{t=1}^TR(\w_t)} - \Psi^\ast(\vgamma)}\\
&{}+ {\frac{\Delta_1}{T}\br{\log T + \log\frac{1}{\delta}}}\\
\leq{}& \inf_{\vgamma}\br{\gamma_1\zeta_1(R(\barw)) + \gamma_2\zeta_2(R(\barw)) - \Psi^\ast(\vgamma)} + {\frac{\Delta_1\log\frac{T}{\delta}}{T}}\\
={}& \Psi(\zeta_1(R(\barw)), \zeta_2(R(\barw))) + {\frac{\Delta_1\log\frac{T}{\delta}}{T}},
\end{align*}\normalsize
where the second last step follows from the Jensen's inequality, the concavity of the functions $P(\w)$ and $N(\w)$, and the assumption that $\zeta_1$ and $\zeta_2$ are increasing functions of both their arguments. Thus, we have, with probability at least $1 - \delta$,
\[
(A) \leq T\cdot\Psi(\zeta_1(R(\barw)),\zeta_2(R(\barw))) + {\Delta_1\log\frac{T}{\delta}}
\]
Note that this is a much stronger bound than what Narasimhan \etal \cite{NarasimhanKJ2015} obtain for their gradient descent based dual updates. This, in some sense, establishes the superiority of the follow-the-leader type algorithms used by \nemsis.

\scriptsize
\[
h_t(\w) = \gamma_{t,1}(\valpha_t^\top\bl_t(\w) - \zeta_1^\ast(\valpha_t)) + \gamma_{t,2}(\vbeta_t^\top\bl_t(\w)  - \zeta_2^\ast(\vbeta_t)) - \Psi^\ast(\vgamma_t)
\]\normalsize

Since the functions $h_t(\cdot)$ are concave and $(L_\Psi L_r(L_{\zeta_1}+L_{\zeta_2}))$-Lipschitz (due to assumptions on the smoothness and values of the reward functions), the standard regret analysis for online gradient ascent (for example \cite{zinkevich}) gives us the following bound on $(A)$, ignoring universal constants

\begin{align*}
(A) ={}& \sum_{t=1}^Th_t(\w_t)\\
		={}& \sum_{t=1}^T\gamma_{t,1}(\valpha_t^\top\bl_t(\w_t) - \zeta_1^\ast(\valpha_t)) + \gamma_{t,2}(\vbeta_t^\top\bl_t(\w_t) - \zeta_2^\ast(\vbeta_t)) - \Psi^\ast(\vgamma_t)\\
		\geq{}& \sum_{t=1}^T\gamma_{t,1}(\valpha_t^\top\bl_t(\w^\ast) - \zeta_1^\ast(\valpha_t)) + \gamma_{t,2}(\vbeta_t^\top\bl_t(\w^\ast) - \zeta_2^\ast(\vbeta_t))\\
		&{}	- \Psi^\ast(\vgamma_t) - \Delta_2{\sqrt T},
\end{align*}\normalsize
where $\Delta_2 = (L_\Psi L_r(L_{\zeta_1}+L_{\zeta_2}))$. Note that the above results hold since we used step lengths $\eta_t = \Theta(1/\sqrt t)$. To achieve the above bounds precisely, $\eta_t$ will have to be tuned to the Lipschitz constant of the functions $h_t(\cdot)$ and for sake of simplicity we assume that the step lengths are indeed tuned so. We also assume, to get the above result , without loss of generality of course, that the model space $\W$ is the unit norm ball in $\R^d$. Applying a standard online-to-batch conversion bound (for example \cite{online-batch-single}), then gives us, with probability at least $1 - \delta$,

\begin{align*}
\frac{(A)}{T} \geq{}& \underbrace{\frac{1}{T}\sum_{t=1}^T\gamma_{t,1}(\valpha_t^\top R(\w^\ast) - \zeta_1^\ast(\valpha_t))}_{(B)} + \underbrace{\frac{1}{T}\sum_{t=1}^T\gamma_{t,2}(\vbeta_t^\top R(\w^\ast) - \zeta_2^\ast(\vbeta_t))}_{(C)}\\
 &{}- \frac{1}{T}\sum_{t=1}^T\Psi^\ast(\vgamma_t) - \Delta_3{\frac{\log\frac{1}{\delta}}{\sqrt T}},
\end{align*}\normalsize
where $\Delta_3 = \Delta_2 + L_\Psi B_r(L_{\zeta_1}+L_{\zeta_2})$. Analyzing the expression $(B)$ 	gives us

\begin{align*}
(B) &= \frac{1}{T}\sum_{t=1}^T\gamma_{t,1}(\valpha_t^\top R(\w^\ast) - \zeta_1^\ast(\valpha_t))\\
		&= \frac{\sum_{t=1}^T\gamma_{t,1}}{T}\br{\br{\sum_{t=1}^T\frac{\gamma_{t,1}\valpha_t}{\sum_{t=1}^T\gamma_{t,1}}}^\top R(\w^\ast) - \sum_{t=1}^T\frac{\gamma_{t,1}}{\sum_{t=1}^T\gamma_{t,1}}\zeta_1^\ast(\valpha_t)}\\
		&\geq \frac{\sum_{t=1}^T\gamma_{t,1}}{T}\br{\br{\sum_{t=1}^T\frac{\gamma_{t,1}\valpha_t}{\sum_{t=1}^T\gamma_{t,1}}}^\top R(\w^\ast) - \zeta_1^\ast\br{\sum_{t=1}^T\frac{\gamma_{t,1}\valpha_t}{\sum_{t=1}^T\gamma_{t,1}}}}\\
		&\geq \frac{\sum_{t=1}^T\gamma_{t,1}}{T}\min_{\valpha}\bc{\valpha^\top R(\w^\ast) - \zeta_1^\ast(\valpha)}\\
		&= \bar\gamma_1\min_{\valpha}\bc{\valpha^\top R(\w^\ast) - \zeta_1^\ast(\valpha)} = \bar\gamma_1\zeta_1(R(\w^\ast))
\end{align*}\normalsize
A similar analysis for $(C)$ follows and we get, ignoring universal constants,

\begin{align*}
\frac{(A)}{T} \geq{}& \bar\gamma_1\zeta_1(R(\w^\ast)) + \bar\gamma_2\zeta_2(R(\w^\ast)) - \frac{1}{T}\sum_{t=1}^T\Psi^\ast(\vgamma_t) - \Delta_3{\frac{\log\frac{1}{\delta}}{\sqrt T}}\\
							\geq{}& \bar\gamma_1\zeta_1(R(\w^\ast)) + \bar\gamma_2\zeta_2(R(\w^\ast)) - \Psi^\ast(\bar\vgamma) - \Delta_3{\frac{\log\frac{1}{\delta}}{\sqrt T}}\\
							\geq{}&\min_{\vgamma}\left\{\gamma_1\zeta_1(R(\w^\ast)) + \gamma_1\zeta_2(R(\w^\ast)) - \Psi^\ast(\vgamma)\right\} - \Delta_3{\frac{\log\frac{1}{\delta}}{\sqrt T}}\\
							={}& \Psi(\zeta_1(R(\w^\ast)), \zeta_2(R(\w^\ast))) - \Delta_3{\frac{\log\frac{1}{\delta}}{\sqrt T}}
\end{align*}\normalsize
Thus, we have with probability at least $1 - \delta$,
\[
(A) \geq T\cdot \Psi(\zeta_1(R(\w^\ast)), \zeta_2(R(\w^\ast))) - \Delta_3{\log\frac{1}{\delta}\sqrt T}
\]
Combining the upper and lower bounds on $(A)$ finishes the proof since $\Delta_3{\log\frac{1}{\delta}\sqrt T}$ overwhelms the term ${\Delta_1\log\frac{T}{\delta}}$.

\newcommand{\drp}{\Pf_\quant}
\newcommand{\nrp}{\Pf_\class}
\newcommand{\tw}{\tilde\w}
\newcommand{\wn}{\w_{t+1}}
\section{Proof of Theorem~\ref{THM:SCANFC-CONV}}
\label{app:scanfcconv}

\noindent We will prove the result by proving a sequence of claims. The first claim ensures that the distance to the optimum performance value is bounded by the performance value we obtain in terms of the valuation function at any step. For notational simplicity, we will use the shorthand $\Pf(\w) := \Pf_{(\Pf_\quant,\Pf_\class)}(\w)$ .
\begin{clm}
\label{clm:boundprog}
$\Pf^\ast := \sup_{\w\in\W}\Pf(\w)$ be the optimal performance level. Also, define $e_t = V(\w_{t+1},v_t)$. Then, for any $t$, we have
\[
\Pf^\ast - \Pf(\w_t) \leq \frac{e_t}{m}
\]
\end{clm}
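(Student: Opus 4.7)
The plan is to exploit the simple algebraic identity that for any $\w$ and level $v$, we have
\[
V(\w,v) = \Pf_\class(\w) - v\cdot\Pf_\quant(\w) = \Pf_\quant(\w)\br{\Pf(\w) - v},
\]
where I use the shorthand $\Pf(\w) := \Pf_{(\Pf_\quant,\Pf_\class)}(\w) = \Pf_\class(\w)/\Pf_\quant(\w)$. This identity is the heart of why the valuation function is a useful surrogate for the pseudo-concave objective. The second ingredient I would use is the observation (noted in the text just before Algorithm~\ref{alg:scanfc}) that the level-estimation step $5$ yields exactly $v_t = \Pf(\w_t)$.

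With these in hand, the argument is short. Let $\w^\ast \in \arg\sup_{\w \in \W}\Pf(\w)$ be an optimal model, so $\Pf(\w^\ast) = \Pf^\ast$. Applying the identity with the level $v_t = \Pf(\w_t)$ to the particular model $\w^\ast$, I get
\[
V(\w^\ast, v_t) = \Pf_\quant(\w^\ast)\br{\Pf^\ast - \Pf(\w_t)}.
\]
Since $\Pf^\ast \geq \Pf(\w_t)$ by optimality of $\w^\ast$, the factor $\Pf^\ast - \Pf(\w_t)$ is nonnegative, and using the assumed lower bound $\Pf_\quant(\w^\ast) \geq m > 0$ gives
\[
m\cdot\br{\Pf^\ast - \Pf(\w_t)} \leq V(\w^\ast, v_t).
\]

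Finally, since $\w_{t+1}$ is defined as the maximizer of $V(\cdot,v_t)$ over $\W$ in step $4$ of Algorithm~\ref{alg:scanfc}, we have $V(\w^\ast,v_t) \leq V(\w_{t+1},v_t) = e_t$. Chaining the two inequalities and dividing by $m > 0$ yields $\Pf^\ast - \Pf(\w_t) \leq e_t/m$, as required. I do not anticipate a significant technical obstacle: the proof is essentially an application of the level-set structure of pseudo-concave functions, with the lower bound $m$ on $\Pf_\quant$ playing the role of converting a bound on the valuation surrogate back into a bound on the original performance measure. The only subtlety worth flagging is ensuring that $v_t$ is genuinely equal to $\Pf(\w_t)$; this follows from the fact that step $5$ looks for the largest $v$ with $V(\w_{t+1},v) \geq v$, and the identity above shows that this largest $v$ is exactly $\Pf(\w_{t+1})$.
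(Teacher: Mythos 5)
Your proof is correct and takes essentially the same route as the paper: both arguments boil down to the identity $V(\w,v_t)=\Pf_\quant(\w)\br{\Pf(\w)-v_t}$ with $v_t=\Pf(\w_t)$, the lower bound $\Pf_\quant\geq m$, and the maximality of $e_t=\max_{\w\in\W}V(\w,v_t)$; the paper merely phrases it as a contradiction, which also avoids assuming the supremum is attained. Since $\W$ is a closed norm-bounded set and $\Pf$ is continuous with $\Pf_\quant\geq m>0$, your use of a maximizer $\w^\ast$ is harmless (and could in any case be replaced by a near-maximizing sequence).
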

\begin{proof}
We will prove the result by contradiction. Suppose $\Pf^\ast > \Pf(\w_t) + \frac{e_t}{m}$. Then there must exist some $\tw \in \W$ such that
\[
\Pf(\tw) = \frac{e_t}{m} + \Pf(\w_t) + e' = \frac{e_t}{m} + v_t + e' =: v',
\]
where $e' > 0$. Note that the above uses the fact that we set $v_t = \Pf(\w_t)$. Then we have
\[
V(\tw,v_t) - e_t = \nrp(\tw) - v_t\cdot\drp(\tw) - e_t.
\]
Now since $\Pf(\tw) = v'$, we have $\nrp(\tw) - v'\cdot\drp(\tw) = 0$ which gives us
\[
V(\tw,v_t) - e_t = \br{\frac{e_t}{m} + e'}\drp(\tw) - e_t \geq \br{\frac{e_t}{m} + e'}m - e_t > 0.
\]
But this contradicts the fact that $\max_{\w \in \W}V(\w,v_t) = e_t$ which is ensured by step 4 of Algorithm~\ref{alg:scanfc}. This completes the proof.
\end{proof}

The second claim then establishes that in case we do get a large performance value in terms of the valuation function at any time step, the next iterate will have a large leap in performance in terms of the original performance function $\Pf$.

\begin{clm}
\label{clm:suffprog}
For any time instant $t$ we have
\[
\Pf(\w_{t+1}) \geq \Pf(\w_t) + \frac{e_t}{M}
\]
\end{clm}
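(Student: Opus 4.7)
The plan is to directly manipulate the definitions of the valuation function and the performance measure. First, I would recall that the update rule in step 5 of Algorithm~\ref{alg:scanfc} sets $v_t = \Pf_{(\Pf_\quant,\Pf_\class)}(\w_t)$ (one can verify that this is exactly the $v$ characterized by $V(\w_t,v)\geq v$), and that $\w_{t+1}$ is chosen in step 4 to maximize $V(\cdot,v_t)$ over $\W$.

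Next, I would unpack the definition $e_t = V(\w_{t+1},v_t) = \Pf_\class(\w_{t+1}) - v_t\cdot\Pf_\quant(\w_{t+1})$, and rearrange to express the ratio $\Pf_{(\Pf_\quant,\Pf_\class)}(\w_{t+1}) = \Pf_\class(\w_{t+1})/\Pf_\quant(\w_{t+1})$ as
\[
\Pf_{(\Pf_\quant,\Pf_\class)}(\w_{t+1}) \;=\; v_t + \frac{e_t}{\Pf_\quant(\w_{t+1})}.
\]
Substituting $v_t = \Pf_{(\Pf_\quant,\Pf_\class)}(\w_t)$, the desired inequality $\Pf(\w_{t+1}) \geq \Pf(\w_t) + e_t/M$ reduces to two sub-claims: (i) $e_t \geq 0$, and (ii) $\Pf_\quant(\w_{t+1}) \leq M$. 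Sub-claim (ii) is immediate from the hypothesis that $\Pf_\quant$ takes values in $[m,M]$.

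For sub-claim (i), I would observe that $\w_t$ itself is a feasible point for the inner maximization in step 4, and a direct computation gives $V(\w_t,v_t) = \Pf_\class(\w_t) - \Pf_{(\Pf_\quant,\Pf_\class)}(\w_t)\cdot\Pf_\quant(\w_t) = 0$. Since $\w_{t+1}$ is the maximizer of $V(\cdot,v_t)$, we get $e_t = V(\w_{t+1},v_t) \geq V(\w_t,v_t) = 0$, establishing (i). Combining (i), (ii) with the displayed identity finishes the proof.

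I do not anticipate any real obstacle here; the claim is essentially a one-line rearrangement once one has observed the identity $V(\w_t,v_t)=0$. The main care needed is just to confirm that the level-update rule in step 5 of the algorithm indeed yields $v_t = \Pf(\w_t)$, so that the inner function $V(\cdot,v_t)$ is anchored at a known zero, which is what both Claim~\ref{clm:boundprog} and this claim crucially rely on.
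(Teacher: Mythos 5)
Your proof is correct and takes essentially the same route as the paper's: both rearrange $V(\w_{t+1},v_t)=e_t$ into $\Pf(\w_{t+1}) = v_t + e_t/\Pf_\quant(\w_{t+1})$, substitute $v_t=\Pf(\w_t)$, and bound $\Pf_\quant(\w_{t+1})\leq M$. The only difference is that you explicitly verify $e_t \geq 0$ (via $V(\w_t,v_t)=0$ and the optimality of $\w_{t+1}$), a fact the paper's one-line chain of inequalities uses implicitly, so your write-up is if anything slightly more complete.
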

\begin{proof}
By our definition, we have $V(\w_{t+1},v_t) = e_t$. This gives us
\begin{align*}
\frac{\nrp(\wn)}{\drp(\wn)} - \br{v_t + \frac{e_t}{M}} &\geq \frac{\nrp(\wn)}{\drp(\wn)} - \br{v_t + \frac{e_t}{\drp(\wn)}}\\
&= \frac{v_t\cdot\drp(\wn)}{\drp(\wn)} - v_t = 0,
\end{align*}
which proves the result.
\end{proof}

We are now ready to establish the convergence proof. Let $\Delta_t = \Pf^\ast - \Pf(\w_t)$. Then we have, by Claim~\ref{clm:boundprog}
\[
e_t \geq m\cdot\Delta_t,
\]
and also
\[
\Pf(\wn) \geq \Pf(\w_t) + \frac{e_t}{M},
\]
by Claim~\ref{clm:suffprog}. Subtracting both sides of the above equation from $\Pf^\ast$ gives us
\begin{align*}
\Delta_{t+1} &= \Delta_t - \frac{e_t}{M}\\
&\leq \Delta_t - \frac{m}{M}\cdot \Delta_t = \br{1 - \frac{m}{M}}\cdot \Delta_t,
\end{align*}
which concludes the convergence proof.

\section{Proof of Theorem~\ref{THM:SCAN-CONV}}
\label{app:scanconv}

To prove this theorem, we will first show that the \scanfc algorithm is robust to imprecise updates. More precisely, we will assume that Algorithm~\ref{alg:scanfc} only ensures that in step 4 we have
\[
V(\w_{t+1},v_t) = \max_{\w\in\W} V(\w,v_t) - \epsilon_t,
\]
where $\epsilon_t > 0$ and step 5 only ensures that
\[
v_{t} = \Pf(\w_t) + \delta_t,
\]
where $\delta_t$ may be positive or negative. For this section, we will redefine
\[
e_t = \max_{\w\in\W}V(\w,v_t)
\]
since we can no longer assume that $V(\wn,v_t) = e_t$. Note that if $v_t$ is an unrealizable value, i.e. for no predictor $\w \in \W$ is $\Pf(\w) \geq v_t$, then we have $e_t < 0$. Having this we establish the following results:

\begin{lem}
\label{lem:scanfc-robust}
Given the previous assumptions on the imprecise execution of Algorithm~\ref{alg:scanfc}, the following is true
\begin{enumerate}
	\item If $\delta_t \leq 0$ then $e_t \geq 0$
	\item If $\delta_t > 0$ then $e_t \geq -\delta_t\cdot M$
	\item We have $\Pf^\ast < v_t$ iff $e_t < 0$
	\item If $e_t \geq 0 $ then $e_t \geq m(\Pf^\ast - v_t)$
	\item If $e_t < 0 $ then $e_t \geq M(\Pf^\ast - v_t)$
	\item If $V(\w,v) = e$ for $e \geq 0$ then $\Pf(\w) \geq v + \frac{e}{M}$
	\item If $V(\w,v) = e$ for $e < 0$ then $\Pf(\w) \geq v + \frac{e}{m}$
\end{enumerate}
\end{lem}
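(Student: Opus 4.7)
Every claim in the lemma ultimately follows from a single algebraic identity: since $\Pf_\quant(\w) > 0$, we can rewrite
\[
V(\w,v) = \Pf_\class(\w) - v\cdot\Pf_\quant(\w) = \Pf_\quant(\w)\br{\Pf(\w) - v}.
\]
Combined with the bound $m \leq \Pf_\quant(\w) \leq M$, this identity converts additive statements about $V$ into additive statements about $\Pf - v$ (and vice versa), with scaling factors from the interval $[m,M]$. I would prove all seven parts from this identity, handling the sign of the relevant quantity carefully since the direction of the inequality $e/\Pf_\quant(\w) \geq e/m$ depends on the sign of $e$.

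\textbf{Parts 1 and 2 (lower bounds on $e_t$ from the sign of $\delta_t$).} I would plug $\w = \w_t$ into the identity, so $V(\w_t,v_t) = \Pf_\quant(\w_t)(\Pf(\w_t) - v_t) = -\delta_t\cdot\Pf_\quant(\w_t)$. For $\delta_t \leq 0$ this is non-negative, giving $e_t \geq V(\w_t,v_t) \geq 0$, which is part 1. For $\delta_t > 0$ the product $-\delta_t\cdot\Pf_\quant(\w_t) \geq -\delta_t M$ by $\Pf_\quant(\w_t) \leq M$, giving part 2.

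\textbf{Part 3 (sign of $e_t$ matches $\Pf^\ast$ vs.\ $v_t$).} Here I would use the identity in both directions. If $\Pf^\ast < v_t$ then every $\w$ satisfies $\Pf(\w) - v_t < 0$, so $V(\w,v_t) < 0$, giving $e_t < 0$. Conversely, if $e_t < 0$, then for every $\w$ we have $V(\w,v_t) \leq e_t < 0$, hence $\Pf(\w) < v_t$, so $\Pf^\ast \leq v_t$; strictness is obtained by taking a near-maximizer and using that $V(\w,v_t) = \Pf_\quant(\w)(\Pf(\w) - v_t) \leq e_t < 0$ forces $\Pf(\w) - v_t \leq e_t/M < 0$.

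\textbf{Parts 4 and 5 (quantitative gap bounds).} Let $\w^\ast$ attain $\Pf^\ast$ (or a suitable supremizing sequence if the sup is not attained). Then $V(\w^\ast,v_t) = \Pf_\quant(\w^\ast)(\Pf^\ast - v_t)$. For part 4 I am in the regime $e_t \geq 0$: if $\Pf^\ast \leq v_t$ the claim is trivial, and otherwise $\Pf^\ast - v_t > 0$ so using $\Pf_\quant(\w^\ast) \geq m$ gives $e_t \geq V(\w^\ast,v_t) \geq m(\Pf^\ast - v_t)$. For part 5 the regime $e_t < 0$ forces $\Pf^\ast - v_t < 0$ by part 3, so now $\Pf_\quant(\w^\ast) \leq M$ is the relevant bound, and $V(\w^\ast,v_t) \geq M(\Pf^\ast - v_t)$.

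\textbf{Parts 6 and 7 (converting valuation to performance).} From $V(\w,v) = e$ the identity gives $\Pf(\w) = v + e/\Pf_\quant(\w)$. When $e \geq 0$, dividing by $\Pf_\quant(\w) \leq M$ only increases the fraction, so $\Pf(\w) \geq v + e/M$; when $e < 0$, the larger denominator makes the (negative) ratio less negative, so using $\Pf_\quant(\w) \geq m$ yields $\Pf(\w) \geq v + e/m$.

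The whole argument is essentially bookkeeping; the only place to be careful is tracking signs when dividing by $\Pf_\quant(\w)$, which controls the split into ``$e \geq 0$ use $M$'' versus ``$e < 0$ use $m$'' cases. I do not expect a genuine obstacle, but this sign-sensitivity is the one place where a mechanical application of the bounds $[m,M]$ would give the wrong direction, so I would lay out the identity once at the start and then apply it uniformly.
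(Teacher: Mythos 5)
Your proof is correct and follows essentially the same route as the paper's: both arguments rest on the factorization $V(\w,v) = \Pf_\quant(\w)\br{\Pf(\w) - v}$ together with the bounds $m \leq \Pf_\quant(\w) \leq M$, applied with the same sign-dependent choice of $m$ versus $M$. The only difference is presentational: you argue parts 3--5 directly by evaluating $V$ at (near-)optimal points, whereas the paper phrases the same algebra as proofs by contradiction (reusing its Claims on the exact algorithm with $\Pf(\w_t)$ replaced by $v_t$).
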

\begin{proof}
We prove the parts separately below
\begin{enumerate}
	\item Since $\delta_t < 0$, there exists some $\w \in \W$ such that $\Pf(\w) > v_t$. The result then follows.
	\item If $v_t = \Pf(\w_t) + \delta_t$ then $V(\w_t,v_t) \geq -\delta_t\cdot M$ .The result then follows.
	\item Had $e_t \geq 0$ been the case, we would have had, for some $\w \in \W$, $V(\w,v_t) \geq 0$ which would have implied $\Pf(\w) \geq v_t$ which contradicts $\Pf^\ast < v_t$. For the other direction, suppose $\Pf^\ast = \Pf(\w^\ast) = v_t + e'$ with $e' > 0$. Then we have $e_t = V(\w^\ast,v_t) > 0$ which contradicts $e_t < 0$.
	\item Observe that the proof of Claim~\ref{clm:boundprog} suffices, by simply replacing $\Pf(\w_t)$ with $v_t$ in the statement.
	\item Assume the contrapositive that for some $\w \in \W$, we have $\Pf(\w) = v_t + \frac{e_t}{M} + e'$ where $e' > 0$. We can then show that $V(\w,v_t) = \br{\frac{e_t}{M} + e'}\drp(\w) \geq e_t + e'\cdot\drp(\w) > e_t$ which contradicts the definition of $e_t$. Note that since $e_t < 0$, we have $\frac{e_t}{M} \geq \frac{e_t}{\drp(\w)}$ and we have $\drp(\w) \geq m > 0$.
	\item Observe that the proof of Claim~\ref{clm:suffprog} suffices, by simply replacing $\Pf(\w_t)$ with $v_t$ in the statement.
	\item We have $\nrp(\w) - v\cdot\drp(\w) = e$. Dividing throughout by $\drp(\w) > 0$ and using $\frac{e}{\drp(\w)} \geq \frac{e}{m}$ since $e < 0$ gives us the result.
\end{enumerate}
This finishes the proofs.
\end{proof}

Using these results, we can now make the following claim on the progress made by \scanfc with imprecise updates.

\begin{lem}
\label{lem:scanfc-noisy-progress}
Even if \scanfc is executed with noisy updates, at any time step $t$, we have
\[
\Delta_{t+1} \leq \br{1 - \frac{m}{M}} \Delta_t + \frac{M}{m}\cdot\abs{\delta_t} + \frac{\epsilon_t}{m}.
\]
\end{lem}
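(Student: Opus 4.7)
The overall plan is to mimic the proof of Theorem~\ref{THM:SCANFC-CONV} but with two sources of slack: the optimization error $\epsilon_t$ in step 4 and the level-estimation error $\delta_t$ in step 5. Lemma~\ref{lem:scanfc-robust} conveniently packages the sign-sensitive inequalities we need, so the proof reduces to a careful case analysis on the signs of $V(\w_{t+1},v_t) = e_t - \epsilon_t$ and $e_t$ itself. I would start by writing $\Pf^\ast - v_t = \Delta_t - \delta_t$ and $\Pf(\w_{t+1}) \geq v_t + \frac{V(\w_{t+1},v_t)}{c}$ for the appropriate $c \in \{m,M\}$ chosen by Parts~6--7, and then upper bound $\Delta_{t+1} = \Pf^\ast - \Pf(\w_{t+1})$ by $(\Pf^\ast - v_t) - \frac{e_t}{c} + \frac{\epsilon_t}{c}$.

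In the easy case $e_t - \epsilon_t \geq 0$, Part~6 gives $c=M$; since then also $e_t \geq 0$, Part~4 yields $e_t \geq m(\Pf^\ast-v_t)$, so after substituting I obtain
\[
\Delta_{t+1} \leq \br{1 - \tfrac{m}{M}}(\Delta_t - \delta_t) + \tfrac{\epsilon_t}{M},
\]
which is dominated by the claimed bound since $1-m/M \leq M/m$ and $1/M \leq 1/m$. When $e_t - \epsilon_t < 0$ but still $e_t \geq 0$, Part~4 forces $\Pf^\ast - v_t \leq e_t/m$, so $(\Pf^\ast - v_t) - e_t/m \leq 0$ and only the $\epsilon_t/m$ term survives.

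The delicate subcase is $e_t < 0$, which by Part~3 corresponds to an over-optimistic level estimate $v_t > \Pf^\ast$; this can only happen when $\delta_t > 0$ and in fact $\delta_t > \Delta_t$. Here Part~7 uses $c=m$ and Part~5 gives $|e_t| \leq M(v_t - \Pf^\ast) = M(\delta_t - \Delta_t)$, leading after a short computation to
\[
\Delta_{t+1} \leq (\delta_t - \Delta_t)\br{\tfrac{M}{m} - 1} + \tfrac{\epsilon_t}{m}.
\]
Since $\delta_t > 0$ in this subcase and $M/m \geq 1$, the first term is at most $\frac{M}{m}|\delta_t|$, and the $-\Delta_t(M/m - 1)$ contribution is non-positive and hence harmless against $(1 - m/M)\Delta_t$. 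Combining the three cases yields the stated inequality.

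The only real obstacle is bookkeeping the signs: because $e_t$ and $V(\w_{t+1},v_t)$ can disagree in sign due to $\epsilon_t$, and because $\delta_t$ can be either sign, one must invoke the correct part of Lemma~\ref{lem:scanfc-robust} for the \emph{exact} value $V(\w_{t+1},v_t) = e_t - \epsilon_t$ when bounding $\Pf(\w_{t+1})$ from below, but for $e_t$ when bounding $\Pf^\ast - v_t$ from above. Once the cases are laid out as above, each is a one-line algebraic manipulation, and the loose constants $M/m$ and $1/m$ on the right-hand side absorb all the slack uniformly.
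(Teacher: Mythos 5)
Your proposal is correct and takes essentially the same route as the paper's proof: it relies on the same parts 3--7 of Lemma~\ref{lem:scanfc-robust}, the identity $\Pf^\ast - v_t = \Delta_t - \delta_t$, and the same three terminal bounds $\br{1-\frac{m}{M}}(\Delta_t-\delta_t)+\frac{\epsilon_t}{M}$, $\frac{\epsilon_t}{m}$, and $(\delta_t-\Delta_t)\br{\frac{M}{m}-1}+\frac{\epsilon_t}{m}$. The only difference is cosmetic: you organize the case analysis by the signs of $e_t$ and $e_t-\epsilon_t$ rather than by the sign of $\delta_t$, but the resulting subcases and algebra coincide with those in the paper.
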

\begin{proof}
We analyze time steps when $\delta_t \leq 0$ separately from time steps when $\delta_t < 0$.\\

\noindent\textbf{Case 1}: $\boldsymbol{\delta_t\leq 0}$ In these time steps, the method underestimates the performance of the current predictor but gives a legal i.e. realizable value of $v_t$. We first deduce that for these time steps, using Lemma~\ref{lem:scanfc-robust} part 1, we have $e_t \geq 0$ and then using part 4, we have $e_t \geq m(\Pf^\ast - v_t)$. This combined with the identity $\Pf^\ast - v_t = \Delta_t - \delta_t$, gives us
\[
e_t \geq m(\Delta_t - \delta_t)
\]
Now we have, by definition, $V(\wn,v_t) = e_t - \epsilon_t$ (note that both $e_t, \epsilon \geq 0$ in this case). The next steps depend on whether this quantity is positive or negative. If $\epsilon_t \leq e_t$, we apply Lemma~\ref{lem:scanfc-robust} part 6 to get
\[
\Pf(\wn) \geq v_t + \frac{e_t - \epsilon_t}{M},
\]
which gives us upon using $\Pf^\ast - v_t = \Delta_t - \delta_t$ and $e_t \geq m(\Delta_t - \delta_t)$,
\[
\Delta_{t+1} \leq \br{1 - \frac{m}{M}} \Delta_t - \br{1 - \frac{m}{M}} \delta_t + \frac{\epsilon_t}{M}
\]
Otherwise if $\epsilon_t > e_t$ then we have actually made negative progress at this time step since $V(\wn,v_t) < 0$. To safeguard us against how much we go back in terms of progress, we us Lemma~\ref{lem:scanfc-robust} part 7 to guarantee
\[
\Pf(\wn) \geq v_t + \frac{e_t - \epsilon_t}{m},
\]
which gives us upon using $\Pf^\ast - v_t = \Delta_t - \delta_t$ and $e_t \geq m(\Delta_t - \delta_t)$,
\[
\Delta_{t+1} \leq \frac{\epsilon_t}{m},
\]
Note however, that we are bound by $\epsilon_t > e_t$ in the above statement. We now move on to analyze the second case.\\

\noindent\textbf{Case 2}: $\boldsymbol{\delta_t > 0}$ In these time steps, the method is overestimating the performance of the current predictor and runs a risk of giving a value of $v_t$ that is unrealizable. We cannot hope to make much progress in these time steps. The following analysis simply safeguards us against too much deterioration. There are two subcases we explore here: first we look at the case where $v_t \leq \Pf^\ast$ i.e. $v_t$ is still a legal, realizable performance value. In this case we continue to have $e_t \geq 0$ and the analysis of the previous case (i.e. $\delta_t \leq 0$) continues to apply.

However, if $v_t > \Pf^\ast$, we are setting an unrealizable value of $v_t$. Using Lemma~\ref{lem:scanfc-robust} part 3 gives us $e_t < 0$ which, upon using part 5 of the lemma gives us
\[
e_t \geq M(\Pf^\ast - v_t).
\]
In this case, we have $V(\wn,v_t) = e_t - \epsilon_t < 0$ since $e_t < 0$ and $\epsilon_t > 0$. Thus, using Lemma~\ref{lem:scanfc-robust} part 7 gives us
\[
\Pf(\wn) \geq v_t + \frac{e_t - \epsilon_t}{m}
\]
which upon manipulation, as before, gives us
\[
\Delta_{t+1} \leq \br{1 - \frac{M}{m}}\Delta_t + \br{\frac{M}{m} - 1}\delta_t + \frac{\epsilon_t}{m} \leq \br{\frac{M}{m} - 1}\delta_t + \frac{\epsilon_t}{m},
\]
where the last step uses the fact that $\Delta_t \geq 0$ and $M \geq m$. Putting all these cases together and using the fact that the quantities $\Delta_t, \epsilon_t, \abs{\delta_t}$ are always positive gives us
\begin{align*}
\Delta_{t+1} &\leq \br{1 - \frac{m}{M}} \Delta_t + \br{\frac{M^2 - m^2}{mM}} \abs{\delta_t} + \frac{\epsilon_t}{m}\\
						 &\leq \br{1 - \frac{m}{M}} \Delta_t + \frac{M}{m}\cdot\abs{\delta_t} + \frac{\epsilon_t}{m},
\end{align*}
which finishes the proof.
\end{proof}

From hereon simple manipulations similar to those used to analyze the \amsgd algorithm in \cite{NarasimhanKJ2015} can be used, along with the guarantees provided by Theorem~\ref{THM:NEMSIS-ANALYSIS} for the \nemsis analysis to finish the proof of the result. We basically have to use the fact that the \nemsis invocations in \scan (Algorithm~\ref{alg:scan} line 8), as well as the performance estimation steps (Algorithm~\ref{alg:scan} lines 14-19) can be seen as executing noisy updates for the original \scanfc algorithm.

\end{document}